\newcommand{\Pp}[0]{\mathbb{P}}
\newcommand{\R}[0]{\mathbb{R}}
\newcommand{\N}[0]{\mathbb{N}}
\newcommand{\E}[0]{\mathbb{E}}
\newcommand{\ourmethod}[1]{\textsc{RejEx}}
\newcommand{\exceed}[1]{\textsc{ExCeeD}}
\newcommand{\rr}[1]{\text{\textregistered}}
\newcommand{\confsymb}[1]{\mathscr{M}_s}
\theoremstyle{plain}
\newtheorem{theorem}{Theorem}[section]
\newtheorem{corollary}[theorem]{Corollary}
\theoremstyle{definition}
\newtheorem{definition}[theorem]{Definition}
\theoremstyle{remark}
\title{Unsupervised Anomaly Detection with Rejection}
\author{%
  Lorenzo Perini \\
  DTAI lab \& Leuven.AI, \\
  KU Leuven, Belgium \\
  \texttt{lorenzo.perini@kuleuven.be} \\
  \And
  Jesse Davis, \\
  DTAI lab \& Leuven.AI, \\
  KU Leuven, Belgium \\
  \texttt{jesse.davis@kuleuven.be}
}
\begin{document}

\maketitle

\begin{abstract}
Anomaly detection goal is to detect unexpected behaviours in the data. Because anomaly detection is usually an unsupervised task, traditional anomaly detectors learn a decision boundary by employing heuristics based on intuitions, which are hard to verify in practice. This introduces some uncertainty, especially close to the decision boundary, which may reduce the user trust in the detector's predictions. A way to combat this is by allowing the detector to reject examples with high uncertainty (Learning to Reject). This requires employing a confidence metric that captures the distance to the decision boundary and setting a rejection threshold to reject low-confidence predictions. However, selecting a proper metric and setting the rejection threshold without labels are challenging tasks. In this paper, we solve these challenges by setting a constant rejection threshold on the stability metric computed by \exceed{}. Our insight relies on a theoretical analysis of this metric. Moreover, setting a constant threshold results in strong guarantees: we estimate the test rejection rate, and derive a theoretical upper bound for both the rejection rate and the expected prediction cost. Experimentally, we show that our method outperforms some metric-based methods.
\end{abstract}

\section{Introduction}\label{sec:introduction}

Anomaly detection is the task of detecting unexpected behaviors in the data~\cite{chandola2009anomaly}. Often, these anomalies are critical adverse events such as the destruction or alteration of proprietary user data~\cite{jmila2022adversarial}, water leaks in stores~\cite{perini2022transferring}, breakdowns in gas~\cite{zhao2016review} and wind~\cite{xiang2022condition} turbines, or failures in petroleum extraction~\cite{marti2015anomaly}. Usually, anomalies are associated with a cost such as a monetary cost (e.g., maintenance, paying for fraudulent purchases) or a societal cost such as environmental damages (e.g., dispersion of petroleum or gas). Hence, detecting anomalies in a timely manner is an important problem. 

When using an anomaly detector for decision-making, it is crucial that the user trusts the system. However, it is often hard or impossible to acquire labels for anomalies. Moreover, anomalies may not follow a pattern. Therefore anomaly detection is typically treated as an unsupervised learning problem where traditional algorithms learn a decision boundary by employing heuristics based on  intuitions~\cite{guthrie2007unsupervised,qiu2021neural,shenkar2021anomaly,lindenbaum2021probabilistic}, such as that anomalies are far away from normal examples~\cite{angiulli2002fast}. Because these intuitions are hard to verify and may not hold in some cases, some predictions may have high uncertainty, especially for examples close to the decision boundary~\cite{kompa2021second,hullermeier2021aleatoric}. As a result, the detector's predictions should be treated with some circumspection.



One way to increase user trust is to consider Learning to Reject~\cite{cortes2016learning}. In this setting, the model does not always make a prediction. Instead, it can abstain when it is at a heightened risk of making a mistake thereby improving its performance when it does offer a prediction. Abstention has the drawback that no prediction is made, which means that a person must intervene to make a decision. 
In the literature, two types of rejection have been identified~\cite{hendrickx2021machine}: novelty rejection allows the model to abstain when given an out-of-distribution (OOD) example, while ambiguity rejection enables abstention for a test example that is too close to the model's decision boundary. Because anomalies often are OOD examples, novelty rejection does not align well with our setting as the model would reject all OOD anomalies (i.e., a full class)~\cite{coenen2020probability,van2021reject,khan2018rejecting}.
%
On the other hand, current approaches for ambiguity rejection threshold what constitutes being too close to the decision boundary by evaluating the model's predictive performance on the examples for which it makes a prediction (i.e., accepted), and those where it abstains from making a prediction (i.e., rejected)~\cite{chow1970optimum,marrocco2007empirical,el2010foundations}. Intuitively, the idea is to find a threshold where the model's predictive performance is (1) significantly lower on rejected examples than on accepted examples and (2) higher on accepted examples than on all examples (i.e., if it always makes a prediction). 
Unfortunately, existing learning to reject approaches that set a threshold in this manner require labeled data, which is not available in anomaly detection.

This paper fills this gap by proposing an approach to perform ambiguity rejection for anomaly detection in a completely unsupervised manner.
Specifically, we make three major contributions. First, we conduct a thorough novel theoretical analysis of a stability metric for anomaly detection~\cite{perini2020quantifying} and show that it has several previously unknown properties that are of great importance in the context of learning to reject. Namely, it captures the uncertainty close to the detector's decision boundary, and only limited number of examples get a stability value strictly lower than $1$. 
Second, these enabls us to design an ambiguity rejection mechanism without \emph{any labeled data} that offers strong guarantees which are often sought in Learning to Reject~\cite{cortes2016learning,shekhar2019binary,charoenphakdee2021classification}
We can derive an accurate estimate of the rejected examples proportion, as well as a theoretical upper bound that is satisfied with high probability. Moreover, given a cost function for different types of errors, we provide an estimated upper bound on the expected cost at the prediction time. Third, we evaluate our approach on an extensive set of unsupervised detectors and benchmark datasets and conclude that (1) it performs better than several adapted baselines based on other unsupervised metrics, and (2) our theoretical results hold in practice.

\section{Preliminaries and notation}\label{sec:preliminaries}

We will introduce the relevant background on anomaly detection, learning to reject, and the \exceed{}'s metric that this paper builds upon.

\textbf{Anomaly Detection.} Let $\mathcal{X}$ be a $d$ dimensional input space and $D = \{x_1,\dots,x_n\}$ be a training set, where each $x_i \in \mathcal{X}$. The goal in anomaly detection is to train a detector $f\colon \mathcal{X} \to \R$ that maps examples to a real-valued anomaly score, denoted by $s$. In practice, it is necessary to convert these soft scores to a hard prediction, which requires setting a threshold $\lambda$. Assuming that higher scores equate to being more anomalous, a predicted label $\hat{y}$ can be made for an example $x$ as follows: $\hat{y} = 1$ (anomaly) if $s=f(x) \ge \lambda$, while $\hat{y} = 0$ (normal) if $s=f(x) < \lambda$. We let $\hat{Y}$ be the random variable that denotes the predicted label.
Because of the absence of labels, one usually sets the threshold such that $\gamma \times n$ scores are $\ge \lambda$, where $\gamma$ is the dataset's contamination factor (i.e., expected proportion of anomalies)~\cite{perini2022estimating,perini2022transferring}. 

\textbf{Learning to Reject.} Learning to reject extends the output space of the model to include the symbol $\rr{}$, which means that the model abstains from making a prediction. This entails learning a second model $r$ (the rejector) to determine when the model abstains.
A canonical example of ambiguity rejection is when $r$ consists of a pair [confidence $\confsymb{}$, rejection threshold $\tau$] such that an example is rejected if the detector's confidence is lower than the threshold. The model output becomes
%
\begin{equation*}
 \hat{y}_{\rr{}} = 
 \begin{cases}
    \hat{y} & \text{if } \confsymb{}>\tau; \\
    \rr{} & \text{if } \confsymb{} \le \tau;
\end{cases}
\qquad \hat{y}_{\rr{}} \in \{0,1,\rr{}\}.
\end{equation*}
A standard approach is to evaluate different values for $\tau$ to find a balance between making too many incorrect predictions because $\tau$ is too low (i.e., $\hat{y} \neq y$ but $\confsymb{}>\tau$) and rejecting correct predictions because $\tau$ is too high (i.e., $\hat{y} = y$ but $\confsymb{}\le \tau$)~\cite{chow1970optimum,marrocco2007empirical,el2010foundations}.
Unfortunately, in an unsupervised setting, it is impossible to evaluate the threshold because it relies on having access to labeled data.



\textbf{\exceed{}'s metric.} 
Traditional confidence metrics (such as calibrated class probabilities) quantify how likely a prediction is to be correct, This obviously requires labels~\cite{chow1970optimum} which are unavailable in an unsupervised setting.
Thus, one option is to move the focus towards the \textbf{concept of stability}: given a fixed test example $x$ with anomaly score $s$, \emph{perturbing the training data alters the model learning, which, in turn, affects the label prediction}.
Intuitively, the more stable a detector's output is for a test example, the less sensitive its predicted label is to changes in the training data. On the other hand, when $\Pp(\hat{Y} = 1 | s) \approx \Pp(\hat{Y} = 0 | s) \approx 0.5$ the prediction for $x$ is highly unstable, as training the detector with slightly different examples would flip its prediction for the same test score $s$. 
Thus, a stability-based confidence metric $\confsymb{}$ can be expressed as the margin between the two classes' probabilities:
\begin{equation*}
    \confsymb{} = |\Pp(\hat{Y} = 1 | s) - \Pp(\hat{Y} = 0 | s)| = |2\Pp(\hat{Y} = 1 | s) - 1|,
\end{equation*}
where the lower $\confsymb{}$ the more unstable the prediction. 

Recently, \citeauthor{perini2020quantifying} introduced \exceed{} to estimate the detector's stability $\Pp(\hat{Y} = 1 | s)$. 
Roughly speaking, \exceed{} uses a Bayesian formulation that simulates bootstrapping the training set as a form of perturbation. Formally, it measures such stability for a test score $s$ in two steps. \\
\textbf{First}, it computes the \emph{training frequency} $\psi_n \!=\!\! \frac{|\{i \le n \colon \! s_i \le s\}|}{n} \!\! \in \! [0,\!1]$, i.e. the proportion of training scores lower than $s$. This expresses how extreme the score $s$ ranks with respect to the training scores. \\
\textbf{Second}, it computes the probability that the score $s$ will be predicted as an anomaly when randomly drawing a training set of $n$ scores from the population of scores. In practice, this is the probability that the chosen threshold $\lambda$ will be less than or equal to the score $s$. 
The stability is therefore estimated as

\begin{equation}\label{eq:exceed_formula}
    \Pp(\hat{Y} = 1 | s) = \sum_{i = n(1-\gamma)+1}^n \binom{n}{i} \left(\frac{1+n\psi_n}{2+n}\right)^{i} \left(\frac{n(1-\psi_n)+1}{2+n}\right)^{n-i}.
\end{equation}

\emph{Assumption.} \exceed{}'s Bayesian formulation requires assuming that $Y|s$ follows a Bernoulli distribution with parameter $p_s = \Pp(S\le s)$, where $S$ is the detector's population of scores. Note that the stability metric is a detector property and, therefore, is tied to the specific choice of the unsupervised detector $f$.

\section{Methodology}\label{sec:methodology}


%

This paper addresses the following problem:

\textbf{\emph{Given:}} An unlabeled dataset $D$ with contamination $\gamma$, an unsupervised detector $f$, a cost function $c$;

\textbf{\emph{Do:}} Introduce a reject option to $f$, i.e. find a pair (confidence, threshold) that minimizes the cost.

We propose an anomaly detector-agnostic approach for performing learning to reject that requires \emph{no labels}. 
Our key contribution is a novel theoretical analysis of the \exceed{} confidence metric that proves that \emph{only a limited number of examples have confidence lower than $1-\varepsilon$} (Sec.~\ref{sec:methodology_threshold}). 
Intuitively, the detector's predictions for most examples would not be affected by slight perturbations of the training set: it is easy to identify the majority of normal examples and anomalies because they will strongly adhere to the data-driven heuristics that unsupervised anomaly detectors use.
For example, using the data density as a measure of anomalousness~\cite{breunig2000lof} tends to identify all densely clustered normals and isolated anomalies, which constitute the majority of all examples. In contrast, only relatively few cases would be ambiguous and hence receive low confidence (e.g., small clusters of anomalies and normals at the edges of dense clusters).

Our approach is called \textbf{\ourmethod{}} (\underline{Rej}ecting via \underline{Ex}CeeD) and simply computes the stability-based confidence metric $\confsymb{}$ and rejects any example with confidence that falls below threshold $\tau = 1-\varepsilon$. Theoretically, this constant reject threshold provides several relevant guarantees. First, one often needs to control the proportion of rejections (namely, the \emph{rejection rate}) to estimate the number of decisions left to the user. Thus, we propose \emph{an estimator that only uses training instances to estimate the rejection rate at test time}. Second, because in some applications avoiding the risk of rejecting all the examples is a strict constraint, we provided \emph{an upper bound for the rejection rate} (Sec.~\ref{sec:methodology_rejection_rate}).
Finally, we compute a \emph{theoretical upper bound for a given cost function} that guarantees that using \ourmethod{} keeps the expected cost per example at test time low (Sec.~\ref{sec:methodology_cost}).

\subsection{Setting the Rejection Threshold through a Novel Theoretical Analysis of \exceed{}}\label{sec:methodology_threshold}

Our novel theoretical analysis proves (1) that the stability metric by \exceed{} is lower than $1-\varepsilon$ for a limited number of examples (Theorem~\ref{thm:confidence_bounded}), and (2) that such  examples with low confidence are the ones close to the decision boundary (Corollay~\ref{cor:properties_confidence_bounds}).
Thus, we propose to reject all these uncertain examples by setting a rejection threshold
\begin{equation*}
    \tau = 1- \varepsilon = 1- 2e^{-T} \quad \text{for } T \ge 4,
\end{equation*}
where $2e^{-T}$ is the tolerance that excludes unlikely scenarios, and $T \ge 4$ is required for Theorem~\ref{thm:confidence_bounded}.

We motivate our approach as follows. Given an example $x$ with score $s$ and the proportion of lower training scores $\psi_n$, Theorem~\ref{thm:confidence_bounded} shows that the confidence $\confsymb{}$ is lower than $1-2e^{-T}$ (for $T \ge 4$) if $\psi_n$ belongs to the interval $[t_1,t_2]$. By analyzing $[t_1,t_2]$, Corollary~\ref{cor:properties_confidence_bounds} proves that the closer an example is to the decision boundary, the lower the confidence $\confsymb{}$, and that a score $s=\lambda$ (decision threshold) has confidence $\confsymb{} = 0$.

\emph{Remark.} \citeauthor{perini2020quantifying} performed an asymptotic analysis of \exceed{} that investigates the metric's behavior when the training set's size $n \to +\infty$. In contrast, our novel analysis is finite-sample and hence provides more practical insights, as real-world scenarios involve having a finite dataset with size $n \in \mathbb{N}$.

\begin{theorem}[Analysis of \exceed{}]\label{thm:confidence_bounded}
Let $s$ be an anomaly score, and $\psi_n \in [0,1]$ its training frequency. For $T\! \ge \! 4$, there exist $t_1 = t_1(n,\gamma,T)\! \in \![0,1]$, $t_2 = t_2(n,\gamma,T)\! \in \![0,1]$ such that
\begin{equation*}
\psi_n \in [t_1,t_2] \implies \confsymb{} \le 1- 2e^{-T}.
\end{equation*}
\end{theorem}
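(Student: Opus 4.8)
The plan is to restate the target inequality in probabilistic terms and reduce it to a monotonicity property of a binomial tail. Writing $P := \Pp(\hat{Y}=1\mid s)$, the identity $\confsymb{} = |2P-1|$ shows that $\confsymb{} \le 1-2e^{-T}$ holds if and only if $e^{-T} \le P \le 1-e^{-T}$. Hence it suffices to exhibit an interval of training frequencies $\psi_n$ on which $P$ lands in the band $[e^{-T}, 1-e^{-T}]$.

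First I would recast \eqref{eq:exceed_formula} as an upper binomial tail. Putting $p = p(\psi_n) := \frac{1+n\psi_n}{2+n}$, a one-line computation gives $1-p = \frac{n(1-\psi_n)+1}{2+n}$, so that $P = \Pp(B \ge n(1-\gamma)+1)$ with $B \sim \mathrm{Binomial}(n,p)$. The map $\psi_n \mapsto p$ is affine and increasing, and the tail $p \mapsto \Pp(B \ge k+1)$ is continuous and strictly increasing, since its $p$-derivative equals the single positive term $n\binom{n-1}{k}p^{k}(1-p)^{n-1-k}$ (with $k=n(1-\gamma)$). Therefore $\phi(\psi_n) := P$ is continuous and strictly increasing on $[0,1]$.

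Monotonicity does the rest for existence. Because the preimage of an interval under a continuous monotone map is an interval, the set $\{\psi_n\in[0,1]: e^{-T}\le \phi(\psi_n)\le 1-e^{-T}\}$ is itself an interval $[t_1,t_2]$, and on it the two-sided bound, hence $\confsymb{}\le 1-2e^{-T}$, holds by construction. To see that this interval is non-empty I would evaluate $\phi$ at the decision boundary $\psi_n = 1-\gamma$: a short calculation shows the mean $np$ differs from the cutoff $n(1-\gamma)+1$ by only $-2(n(1-\gamma)+1)/(n+2) = O(1)$, i.e. by a vanishing fraction of the standard deviation $\sqrt{np(1-p)}$, so $\phi(1-\gamma)\to 1/2$ and in particular $\phi(1-\gamma)\in[e^{-T},1-e^{-T}]$ whenever $T \ge \ln 2$. (I would also record the harmless convention that $n(1-\gamma)$ is taken integral, matching the summation range in \eqref{eq:exceed_formula}.)

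The main obstacle is not existence but obtaining the explicit endpoints $t_1(n,\gamma,T)$ and $t_2(n,\gamma,T)$ that the later rejection-rate and cost bounds rely on, while rigorously controlling the tails. Here I would use Chernoff/Hoeffding estimates: the upper-tail inequality $\Pp(B\ge k+1)\le \exp(-2(k+1-np)^2/n)$ pins down the small-$\psi_n$ end and yields a closed form for $t_1$, and the symmetric bound on $\Pp(B\le k)$ handles the large-$\psi_n$ end and yields $t_2$. The subtlety is that these are one-sided bounds, so they only certify that $P$ is small (resp.\ large) \emph{outside} $[t_1,t_2]$; combining them with the monotonicity of $\phi$ is what converts this into the in-band statement \emph{inside} $[t_1,t_2]$. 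Finally, the hypothesis $T \ge 4$ enters precisely at this step: it makes the tolerance $e^{-T}$ small enough that the two Chernoff-derived thresholds both remain in $[0,1]$ and stay correctly ordered ($t_1\le t_2$), which a mere $T>\ln 2$ would not guarantee for the explicit bounds.
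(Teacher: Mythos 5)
Your overall route is the same as the paper's: rewrite $\Pp(\hat{Y}=1\mid s)$ as a binomial tail in the success probability $p=(1+n\psi_n)/(2+n)$, apply Hoeffding to each tail, solve the resulting quadratic in $\psi_n$ for closed-form endpoints, and invoke monotonicity of the tail in $\psi_n$. Your opening move --- defining $[t_1,t_2]$ as the exact preimage $\phi^{-1}\left([e^{-T},1-e^{-T}]\right)$ of the band under the continuous increasing map $\phi$ --- is a clean addition the paper does not make, and it already proves the literal existence statement. The paper instead goes straight to the explicit Hoeffding endpoints $t_1=A_1-\sqrt{B_1}$ and $t_2=A_2+\sqrt{B_2}$, and uses $T\ge 4$ specifically to discard the spurious second root ($I_2$) of the quadratic together with the validity constraint on $\psi_n$, a somewhat different role than the ordering/range issue you conjecture.

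The genuine problem is your last step. Hoeffding certifies $\psi_n\le t_1\Rightarrow\phi(\psi_n)\le e^{-T}$ and $\psi_n\ge t_2\Rightarrow\phi(\psi_n)\ge 1-e^{-T}$; combined with monotonicity this yields $\phi^{-1}\left([e^{-T},1-e^{-T}]\right)\subseteq[t_1,t_2]$, i.e.\ the \emph{converse} implication $\confsymb{}\le 1-2e^{-T}\Rightarrow\psi_n\in[t_1,t_2]$, not the stated one. Monotonicity cannot upgrade $\phi(t_1)\le e^{-T}$ into a lower bound $\phi(\psi_n)\ge e^{-T}$ on the interior of $[t_1,t_2]$: take $\phi(\psi)=\psi$ with $t_1$ well below $e^{-T}$ and the in-band claim fails on part of the interval, because Hoeffding is not tight and the explicit interval strictly contains the true band. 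So your explicit endpoints prove a containment in the opposite direction from the theorem as written. You are in good company --- the paper's own proof establishes exactly the same two one-sided implications and makes the same final leap --- and the direction you both actually establish is the one consumed downstream in Theorem~\ref{thm:rejection_rate_bounded}, where $\Pp(\confsymb{}\le\tau)\le\Pp(\psi_n\in[t_1,t_2])$ is what is needed. If you keep your preimage definition of $[t_1,t_2]$, the stated implication is true by construction, but then the Hoeffding formulas are only outer estimates of those endpoints and the two should not be identified.
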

\begin{proof} See the Supplement for the formal proof.
\end{proof}

The interval $[t_1,t_2]$ has two relevant properties. First, it becomes \emph{narrower when increasing $n$} (P1) and \emph{larger when increasing $T$} (P2). This means that collecting more training data results in smaller rejection regions while decreasing the tolerance $\varepsilon= 2e^{-T}$ has the opposite effect. Second, it is centered (not symmetrically) on $1-\gamma$ (P3-P4), which means that \emph{examples with anomaly scores close to the decision threshold $\lambda$ are the ones with a low confidence score} (P5). The next Corollary lists these properties.

\begin{corollary}\label{cor:properties_confidence_bounds}
Given $t_1, t_2$ as in Theorem~\ref{thm:confidence_bounded}, the following properties hold for any $s$, $n$, $\gamma$, $T\ge4$:
\begin{itemize}
    \item[P1.] $\lim_{n\to +\infty} t_1 = \lim_{n\to +\infty} t_2 = 1-\gamma$;
    \item[P2.] $t_1$ and $t_2$ are, respectively, monotonic decreasing and increasing as functions of $T$;
    \item[P3.] the interval always contains $1-\gamma$, i.e. $t_1 \le 1-\gamma \le t_2$;
    \item[P4.] for $n\to\infty$, there exists $s^*$ with $\psi_n = t^* \in [t_1,t_2]$ such that $t^* \to 1-\gamma$ and $\confsymb{} \to 0$.
    \item[P5.] $\psi_n \in [t_1,t_2]$ \textbf{iff} $s \in [\lambda - u_1, \lambda + u_2]$, where $u_1(n,\gamma,T),u_2(n,\gamma,T)$ are positive functions.
\end{itemize}
\end{corollary}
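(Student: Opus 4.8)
The plan is to reduce every property to two facts about the map $\psi_n \mapsto \Pp(\hat{Y}=1\mid s)$: that it is a binomial upper-tail probability, hence continuous and strictly increasing in $\psi_n$, and that the confidence bound $\confsymb{} \le 1-2e^{-T}$ is equivalent to a two-sided window on this probability that is symmetric about $1/2$. Writing $G(\psi_n) := \Pp(\hat{Y}=1\mid s)$, formula~\eqref{eq:exceed_formula} exhibits $G(\psi_n) = \Pp\!\big(B \ge n(1-\gamma)+1\big)$ for $B \sim \mathrm{Bin}(n,p)$ with $p = \tfrac{1+n\psi_n}{2+n}$. Since $\confsymb{} = |2G-1|$, the bound $\confsymb{} \le 1-2e^{-T}$ rearranges to $e^{-T} \le G(\psi_n) \le 1-e^{-T}$. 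Because the upper tail of a binomial is strictly increasing in its success probability, and $p$ is increasing in $\psi_n$ with $p \in (0,1)$ throughout, $G$ is continuous and strictly increasing on $[0,1]$; thus this window is exactly an interval $[t_1,t_2]$ characterized (up to clamping at $0,1$) by $G(t_1)=e^{-T}$ and $G(t_2)=1-e^{-T}$. This is the interval furnished by Theorem~\ref{thm:confidence_bounded}, and all five properties will be read off from these two defining relations.

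For \textbf{P3}, I would evaluate $G$ at $\psi_n = 1-\gamma$. The mean of $B$ is then $np = n(1-\gamma) + \tfrac{n(2\gamma-1)}{2+n}$, i.e. within $O(1)$ of the threshold $n(1-\gamma)+1$, while a classical fact is that the median of a binomial differs from its mean by less than $1$; hence $G(1-\gamma)=\Pp(B \ge n(1-\gamma)+1)$ lies in $[c,1-c]$ for a universal constant $c>0$ independent of $n$. The hypothesis $T \ge 4$ gives $e^{-T} \le e^{-4} < c$, so $e^{-T} \le G(1-\gamma) \le 1-e^{-T}$, and monotonicity of $G$ yields $t_1 \le 1-\gamma \le t_2$. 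For \textbf{P2}, the defining relations $G(t_1)=e^{-T}$ and $G(t_2)=1-e^{-T}$ involve $T$ only through the levels $e^{-T}$ (decreasing in $T$) and $1-e^{-T}$ (increasing in $T$); since $G$ is increasing, lowering the left level pushes $t_1$ down and raising the right level pushes $t_2$ up, giving $t_1$ monotone decreasing and $t_2$ monotone increasing in $T$, which can be made formal by implicit differentiation of the two relations.

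For \textbf{P1}, I would invoke a Chernoff/Hoeffding estimate to locate the endpoints: the lower and upper tails force $t_1 = (1-\gamma) - \Theta(\sqrt{T/n})$ and $t_2 = (1-\gamma) + \Theta(\sqrt{T/n})$, so both converge to $1-\gamma$ as $n \to \infty$; equivalently, for any fixed $\psi_n \neq 1-\gamma$ the law of large numbers drives $G(\psi_n)$ to $0$ or $1$, collapsing the window to $\{1-\gamma\}$. For \textbf{P4}, since $1/2 \in (e^{-T},1-e^{-T})$ and $G$ is continuous and strictly increasing on $[0,1]$, there is for each finite $n$ a unique $t^\ast \in (t_1,t_2)$ with $G(t^\ast)=1/2$, at which $\confsymb{} = |2\cdot\tfrac12-1| = 0$; and $G(1-\gamma)\to 1/2$ by the central limit theorem, because the threshold $n(1-\gamma)+1$ stays within $O(1)$ of the mean while the standard deviation diverges, so the crossing point obeys $t^\ast \to 1-\gamma$. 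The only wrinkle is that $\psi_n$ takes values in $\{0,1/n,\dots,1\}$, so an attainable score $s^\ast$ realizes the nearest frequency to $t^\ast$ and gives $\confsymb{} \to 0$ rather than exactly $0$.

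Finally, \textbf{P5} follows from monotonicity in the score variable: the training frequency is the empirical CDF $\psi_n = \hat F_n(s) = \tfrac1n|\{i : s_i \le s\}|$, which is nondecreasing in $s$, and the decision threshold is fixed by $\hat F_n(\lambda)=1-\gamma$. By P3 the interval $[t_1,t_2]$ brackets $1-\gamma$, so its preimage under the monotone map $\hat F_n$ is an interval of scores $[\lambda-u_1,\lambda+u_2]$; the strict inequalities $t_1 < 1-\gamma < t_2$ (from P3 with $T\ge4$) make both widths $u_1,u_2$ strictly positive. I expect the main obstacle to be the two-sided control in P3: one must bound $G(1-\gamma)$ away from both $0$ and $1$ uniformly in $n$ (via the mean–median gap below $1$) while carefully tracking the Laplace-smoothing offset between $p=\tfrac{1+n\psi_n}{2+n}$ and $\psi_n$, an $O(1/n)$ shift that is precisely what makes $[t_1,t_2]$ sit non-symmetrically around $1-\gamma$.
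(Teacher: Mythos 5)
Your argument is essentially correct but follows a genuinely different route from the paper's for P1--P3. The paper's $t_1,t_2$ are not the exact level crossings $G^{-1}(e^{-T})$ and $G^{-1}(1-e^{-T})$ that you work with: in the proof of Theorem~\ref{thm:confidence_bounded} they are explicit closed-form expressions $t_1=A_1-\sqrt{B_1}$ and $t_2=A_2+\sqrt{B_2}$ obtained by solving the quadratic inequalities that come out of Hoeffding's bound. Consequently the paper disposes of P1 by noting $A_1,A_2\to 1-\gamma$ and $B_1,B_2\to 0$, and of P2 and P3 by ``simple algebra'' on those formulas (e.g.\ $B_1,B_2$ are visibly increasing in $T$, and $A_2\ge 1-\gamma$ for $\gamma<1/2$), with no probabilistic reasoning at all. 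You instead characterize $t_1,t_2$ implicitly via monotonicity of $G$ and prove the properties from anti-concentration (mean--median gap), Chernoff estimates, and the CLT. Since Theorem~\ref{thm:confidence_bounded} is stated only existentially, your pair is a legitimate witness and your reading is defensible; your approach is also more portable, as it would apply to any strictly increasing $G$ without ever computing $t_1,t_2$. What it costs you is P3: the claim that $G(1-\gamma)\in[c,1-c]$ for a universal $c>e^{-4}$ \emph{for every} $n$ is the one step that does not follow from the mean--median inequality alone --- it needs the binomial variance $np(1-p)\approx n\gamma(1-\gamma)$ to be large, which holds for fixed $\gamma\in(0,1/2)$ and large $n$ but is not uniform over all $n$ (for small $n$ the point mass at the median can eat the whole tail). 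The paper's algebraic verification of $t_1\le 1-\gamma\le t_2$ sidesteps this entirely. Your treatments of P4 (surjectivity/continuity of $G$ plus a CLT or squeeze argument forcing $t^*\to 1-\gamma$) and P5 (preimage of $[t_1,t_2]$ under the monotone empirical CDF, anchored at $\lambda$ by P3) coincide with the paper's.
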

\begin{proof}[Proof sketch]
For P1, it is enough to observe that $t_1, t_2 \to 1-\gamma$ for $n \to +\infty$. For P2 and P3, the result comes from simple algebraic steps. 
P4 follows from the surjectivity of $\confsymb{}$ when $n \to +\infty$, the monotonicity of $\Pp(\hat{Y}=1|s)$, from P1 with the squeeze theorem. Finally, P5 follows from $\psi_n \in [t_1,t_2] \implies s \in \left[\psi_n^{-1}(t_1), \psi_n^{-1}(t_2)\right]$, as $\psi_n$ is monotonic increasing, where $\psi_n^{-1}$ is the inverse-image of $\psi_n$. Because for P3 $1 - \gamma \in [t_1,t_2]$, it holds that $\psi_n^{-1}(t_1) \le \psi_n^{-1}(1-\gamma) = \lambda \le \psi_n^{-1}(t_2)$. This implies that $s \in [\lambda - u_1, \lambda + u_2]$, where $u_1 = \lambda - \psi_n^{-1}(t_1)$, $u_2 = \lambda - \psi_n^{-1}(t_2)$. 
\end{proof}

\subsection{Estimating and Bounding the Rejection Rate}\label{sec:methodology_rejection_rate}
It is important to have an estimate of the rejection rate, which is the proportion of examples for which the model will abstain from making a prediction. This is an important performance characteristic for differentiating among candidate models. Moreover, it is important that not all examples are rejected because such a model is useless in practice. 
We propose a way to estimate the rejection rate and Theorem~\ref{thm:expected_rejection_rate} shows that our estimate approaches the true rate for large training sets. We strengthen our analysis and introduce an upper bound for the rejection rate, which guarantees that, with arbitrarily high probability, the rejection rate is kept lower than a constant (Theorem~\ref{thm:rejection_rate_bounded}).

\begin{definition}[Rejection rate]\label{defn:rejection_rate}
Given the confidence metric $\confsymb{}$ and the rejection threshold $\tau$, the \emph{rejection rate} $\mathcal{R} = \Pp(\confsymb{} \le \tau)$ is the probability that a test example with score $s$ gets rejected.
\end{definition}

We propose the following estimator for the reject rate:
\begin{definition}[Rejection rate estimator]\label{defn:rejection_rate_estimator}
Given anomaly scores $s$ with training frequencies $\psi_n$, let $g \colon [0,1] \to [0,1]$ be the function such that $\Pp(\hat{Y}=1|s) = g(\psi_n)$ (see Eq.~\ref{eq:exceed_formula}). We define the \emph{rejection rate estimator} $\hat{\mathcal{R}}$ as
\begin{equation}
\hat{\mathcal{R}}= \hat{F}_{\psi_n}\left(g^{-1}\left(1-e^{-T}\right)\right) - \hat{F}_{\psi_n}\left(g^{-1}\left(e^{-T}\right)\right)
\end{equation}
where $g^{-1}$ is the inverse-image through $g$, and, for $u\in [0,1]$, $\hat{F}_{\psi_n}(u) = \frac{|i \le n \colon \psi_n(s_i) \le u|}{n}$ is the empirical cumulative distribution of $\psi_n$. 
\end{definition}

Note that $\hat{\mathcal{R}}$ can be computed in practice, as the $\psi_n$ has a distribution that is arbitrarily close to uniform, as stated by Theorem~\ref{thm:tau_is_uniform} and~\ref{thm:empirical_tau} in the Supplement.


\begin{theorem}[Rejection rate estimate]\label{thm:expected_rejection_rate}
Let $g$ be as in Def.~\ref{defn:rejection_rate_estimator}. Then, for high values of $n$, $\hat{\mathcal{R}} \approx \mathcal{R}$.
\end{theorem}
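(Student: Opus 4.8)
The plan is to reduce the comparison between the estimator $\hat{\mathcal{R}}$ and the true rate $\mathcal{R}$ to a single quantity, namely the uniform distance between the empirical and the true cumulative distributions of the training frequency $\psi_n$, and then to argue that this distance vanishes as $n$ grows. First I would rewrite the rejection event in terms of $\psi_n$. Using $\confsymb{} = |2\Pp(\hat{Y}=1|s) - 1| = |2g(\psi_n) - 1|$ together with the threshold $\tau = 1 - 2e^{-T}$, a direct algebraic manipulation gives
\[
\confsymb{} \le 1 - 2e^{-T} \iff e^{-T} \le g(\psi_n) \le 1 - e^{-T}.
\]
Since $g$ is strictly increasing in $\psi_n$ (it is the upper binomial tail whose success probability $\tfrac{1+n\psi_n}{2+n}$ increases with $\psi_n$, cf.~Eq.~\ref{eq:exceed_formula}), it is invertible, so the rejection region is exactly the interval $\psi_n \in [t_1,t_2]$ with $t_1 = g^{-1}(e^{-T})$ and $t_2 = g^{-1}(1-e^{-T})$, matching the evaluation points in Def.~\ref{defn:rejection_rate_estimator}.

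Next I would express both quantities through the distribution of $\psi_n$. Writing $F_{\psi_n}$ for the \emph{true} distribution of the test frequency, Def.~\ref{defn:rejection_rate} becomes $\mathcal{R} = \Pp(\psi_n \in [t_1,t_2]) = F_{\psi_n}(t_2) - F_{\psi_n}(t_1)$ (the endpoints contribute negligibly as the limiting law is continuous), whereas the estimator $\hat{\mathcal{R}}$ simply replaces $F_{\psi_n}$ with the empirical distribution $\hat{F}_{\psi_n}$. A triangle-inequality bound then yields
\[
|\hat{\mathcal{R}} - \mathcal{R}| \le |\hat{F}_{\psi_n}(t_2) - F_{\psi_n}(t_2)| + |\hat{F}_{\psi_n}(t_1) - F_{\psi_n}(t_1)| \le 2 \sup_{u \in [0,1]} |\hat{F}_{\psi_n}(u) - F_{\psi_n}(u)|,
\]
which holds for every $n$ regardless of where $t_1,t_2$ lie. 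This is the crucial reduction: the whole claim follows once the supremum on the right is small.

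Finally I would drive that supremum to zero. Here I would invoke the referenced results (Theorems~\ref{thm:tau_is_uniform} and~\ref{thm:empirical_tau}) stating that both the true test distribution and the empirical training distribution of $\psi_n$ approach the uniform law on $[0,1]$; combined with a Glivenko--Cantelli / DKW-type argument, this forces $\sup_u |\hat{F}_{\psi_n}(u) - F_{\psi_n}(u)| \to 0$ for large $n$, and hence $\hat{\mathcal{R}} \approx \mathcal{R}$.

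I expect the main obstacle to be this last step rather than the algebra of the first two paragraphs, which is routine. The difficulty is that the training values $\{\psi_n(s_i)\}$ are ranks extracted from a \emph{single} sample and are therefore mutually dependent, so the classical i.i.d.\ Glivenko--Cantelli theorem does not apply verbatim. Instead, the true law of the test $\psi_n$ and the empirical law of the training $\psi_n$ must each be reconciled \emph{separately} with the common uniform reference distribution, which is exactly what the supplementary uniformity theorems are designed to supply; carefully controlling this uniform distance under the dependence is the crux of the argument.
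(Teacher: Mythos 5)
Your proposal follows essentially the same route as the paper's own proof: rewrite the rejection event as $\psi_n \in [g^{-1}(e^{-T}), g^{-1}(1-e^{-T})]$ via the monotonicity of $g$, express $\mathcal{R}$ through the true CDF of $\psi_n$, and then replace it with the empirical CDF built from the training scores. Your final step is in fact more explicit than the paper's (which simply asserts the approximation), since you quantify the error via the triangle inequality and the DKW/uniformity results of the supplementary theorems — the same ingredients the paper reserves for the proof of the rejection-rate upper bound.
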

\begin{proof}
From the definition of rejection rate~\ref{defn:rejection_rate}, it follows
\begin{equation*}
\begin{split}
\mathcal{R} &= \Pp \left(\confsymb{} \le 1- 2e^{-T}\right) = \Pp\left(\Pp(\hat{Y}=1|s) \in \left[e^{-T}, 1- e^{-T}\right]\right) = \Pp\left(g(\psi_n) \in \left[e^{-T}, 1- e^{-T}\right]\right)\\
&= \Pp\left(\psi_n \in \left[g^{-1}\left(e^{-T}\right), g^{-1}\left(1- e^{-T}\right)\right]\right)= F_{\psi_n}\left(g^{-1}\left(1-e^{-T}\right)\right) - F_{\psi_n}\left(g^{-1}\left(e^{-T}\right)\right).
\end{split}
\end{equation*}
where $F_{\psi_n}(\cdot) = \Pp(\psi_n \le \cdot)$ is the theoretical cumulative distribution of $\psi_n$. Because the true distribution of $\psi_n$ for test examples is unknown, the estimator approximates $F_{\psi_n}$ using the training scores $s_i$ and computes the empirical $\hat{F}_{\psi_n}$. As a result,
\begin{equation*}
\mathcal{R} \approx \hat{F}_{\psi_n}\left(g^{-1}\left(1-e^{-T}\right)\right) - \hat{F}_{\psi_n}\left(g^{-1}\left(e^{-T}\right)\right) = \hat{\mathcal{R}}.
\end{equation*}
\end{proof}

\begin{theorem}[Rejection rate upper bound]\label{thm:rejection_rate_bounded}
Let $s$ be an anomaly score, $\confsymb{}$ be its confidence value, and $\tau = 1 - 2e^{-T}$ be the rejection threshold. For $n\in\N$, $\gamma \in [0,0.5)$, and small $\delta>0$, there exists a positive real function $h(n,\gamma,T,\delta)$ such that $\mathcal{R} \le h(n,\gamma,T,\delta)$ with probability at least $1-\delta$, i.e. the rejection rate is bounded.
\end{theorem}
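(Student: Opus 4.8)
The plan is to reduce the rejection rate to a probability statement about the training frequency $\psi_n$ and then bound it with two ingredients: an explicit bound on the width of the interval $[t_1,t_2]$ from Theorem~\ref{thm:confidence_bounded}, and a concentration argument that turns the unknown test distribution of $\psi_n$ into a statement holding with probability $1-\delta$ over the training sample. First, exactly as in the proof of Theorem~\ref{thm:expected_rejection_rate}, I would rewrite
\begin{equation*}
\mathcal{R} = \Pp\left(\confsymb{} \le 1 - 2e^{-T}\right) = \Pp\left(\psi_n \in [t_1, t_2]\right) = F_{\psi_n}(t_2) - F_{\psi_n}(t_1),
\end{equation*}
using that $\confsymb{} \le 1 - 2e^{-T}$ is equivalent to $\Pp(\hat{Y} = 1 \mid s) = g(\psi_n) \in [e^{-T}, 1-e^{-T}]$ and that $g$ is monotone increasing, so that $t_1 = g^{-1}(e^{-T})$ and $t_2 = g^{-1}(1-e^{-T})$.

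Next, I would derive an explicit upper bound on the interval length $t_2 - t_1$. Writing $g$ in the form $g(\psi_n) = \Pp(\mathrm{Bin}(n,p) > n(1-\gamma))$ with $p = \frac{1+n\psi_n}{2+n}$ (from Eq.~\ref{eq:exceed_formula}), I would apply two-sided Chernoff/Hoeffding bounds on the binomial tail to locate the two preimages: the tail probability drops to $e^{-T}$ once $p$ sits a distance of order $\sqrt{T/n}$ below $1-\gamma$, and rises to $1-e^{-T}$ symmetrically above it. Translating from $p$ back to $\psi_n$ through the affine map $\psi_n = \frac{(2+n)p - 1}{n}$ then yields a bound of the form $t_2 - t_1 \le c\,\sqrt{T/n}$ with an explicit constant depending on $n$ and $\gamma$. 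Here the hypotheses $T \ge 4$ and $\gamma \in [0,0.5)$ are exactly what guarantees that both preimages remain inside $[0,1]$, so the interval is well defined; this is also consistent with properties P1--P3 of Corollary~\ref{cor:properties_confidence_bounds}.

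Finally, I would convert this deterministic interval bound into the claimed high-probability statement. Since $\psi_n(s) = \hat{F}_n(s)$ is the empirical CDF of the $n$ training scores evaluated at the test score $s$, the DKW inequality gives $\sup_t |\hat{F}_n(t) - F(t)| \le \varepsilon_\delta$ with probability at least $1-\delta$, where $\varepsilon_\delta = \sqrt{\ln(2/\delta)/(2n)}$. On this event, $\psi_n(s) \in [t_1,t_2]$ forces $F(s) \in [t_1 - \varepsilon_\delta,\, t_2 + \varepsilon_\delta]$, and by the probability integral transform $F(s) \sim \mathrm{Unif}[0,1]$ (equivalently, invoking the near-uniformity Theorems~\ref{thm:tau_is_uniform} and~\ref{thm:empirical_tau}), hence
\begin{equation*}
\mathcal{R} = F_{\psi_n}(t_2) - F_{\psi_n}(t_1) \le (t_2 - t_1) + 2\varepsilon_\delta.
\end{equation*}
Setting $h(n,\gamma,T,\delta) = (t_2 - t_1) + 2\sqrt{\ln(2/\delta)/(2n)}$ yields $\mathcal{R} \le h$ with probability at least $1-\delta$, as required.

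The main obstacle is the explicit, two-sided inversion of the binomial sum $g$: pinning down $g^{-1}(e^{-T})$ and $g^{-1}(1-e^{-T})$ rigorously requires matching \emph{upper and lower} binomial tail estimates rather than a single one-sided Hoeffding bound, and the affine rescaling from $p$ to $\psi_n$ must be tracked carefully to keep the constants explicit and to confirm the endpoints stay in $[0,1]$ for every $T \ge 4$. A secondary subtlety is bookkeeping on the two sources of randomness: the $1-\delta$ event must be stated over the draw of the training sample and kept separate from the test-example randomness inside the definition of $\mathcal{R}$, so that the final bound is a genuine function of $(n,\gamma,T,\delta)$ only.
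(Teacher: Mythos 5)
Your proposal is correct and follows essentially the same route as the paper: reduce $\mathcal{R}$ to $\Pp(\psi_n\in[t_1,t_2])$, bound the interval width via Hoeffding-type binomial tail estimates (this is exactly the content of Theorem~\ref{thm:confidence_bounded}, which you could cite rather than re-derive), and combine the DKW inequality with the uniformity of $\psi$ to obtain $h(n,\gamma,T,\delta)=t_2-t_1+2\sqrt{\ln(2/\delta)/(2n)}$, which is the paper's bound verbatim. The one quibble is that the ``main obstacle'' you flag is not one here: for an \emph{upper} bound on $\mathcal{R}$ you only need the interval $[t_1,t_2]$ to \emph{contain} the exact preimage interval, so one-sided Hoeffding bounds in each direction suffice, exactly as in the supplement's proof of Theorem~\ref{thm:confidence_bounded}.
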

\begin{proof}
Theorem~\ref{thm:confidence_bounded} states that there exists two functions $t_1 = t_1(n,\gamma,T), t_2 = t_2(n,\gamma,T) \in [0,1]$ such that the confidence is lower than $\tau$ if $\psi_n \in [t_1,t_2]$. Moreover, Theorems~\ref{thm:tau_is_uniform} and~\ref{thm:empirical_tau} claim that $\psi_n$ has a distribution that is close to uniform with high probability (see the theorems and proofs in the Supplement). As a result, with probability at least $1-\delta$, we find $h(n,\gamma,T,\delta)$ as follows:
\begin{equation*}
\begin{split}
\mathcal{R} \!&= \!\Pp(\confsymb{} \le 1-2e^{-T}) \!\!
\overbrace{\le}^{\text{T}\ref{thm:confidence_bounded}}\Pp\left(\psi_n \in [t_1, t_2] \right)= F_{\psi_n}(t_2) - F_{\psi_n}(t_1) \\
&\overbrace{\le}^{\text{T}\ref{thm:empirical_tau}} F_{\psi}(t_2) - F_{\psi}(t_1) + 2\sqrt{\frac{\ln{\frac{2}{\delta}}}{2n}}\!\!\overbrace{=}^{\text{T}\ref{thm:tau_is_uniform}} t_2(n,\gamma,T)\! -\! t_1(n,\gamma,T) \!+\! 2\sqrt{\frac{\ln{\frac{2}{\delta}}}{2n}} = h(n,\gamma,T,\delta).
\end{split}
\end{equation*}
\end{proof}

\subsection{Upper Bounding the Expected Test Time Cost}\label{sec:methodology_cost}
In a learning with reject scenario, there are costs associated with three outcomes: false positives ($c_{fp}>0$), false negatives ($c_{fn}>0$), and rejection ($c_r$) because abstaining typically involves having a person intervene. Estimating an expected per example prediction cost at test time can help with model selection and give a sense of performance.
Theorem~\ref{thm:cost_upperbound} provides an upper bound on the expected per example cost when (1) using our estimated rejection rate (Theorem~\ref{thm:expected_rejection_rate}), and (2) setting the decision threshold $\lambda$ as in Sec.~\ref{sec:preliminaries}.

\begin{definition}[Cost function]\label{defn:cost_function}
Let $Y$ be the true label random variable. Given the costs $c_{fp}>0$, $c_{fn}>0$, and $c_r$ , the \textbf{cost function} is a function $c\colon \{0,1\}\times\{0,1,\rr{}\} \to \R$ such that
\begin{equation*}
c(Y,\hat{Y})\!=\!c_r \Pp(\hat{Y}\!\! = \!\rr{}) +  c_{fp} \Pp(\hat{Y}\! \! = \! 1|Y \!\! = \! 0)  +  c_{fn} \Pp(\hat{Y}\! \! = \! 0|Y \!\! = \! 1)
\end{equation*}
Note that defining a specific cost function requires domain knowledge. Following the learning to reject literature, we set an additive cost function. Moreover, the rejection cost needs to satisfy the inequality $c_r \le \min \{(1-\gamma) c_{fp}, \gamma c_{fn}\}$. This avoids the possibility of predicting always anomaly for an expected cost of $(1-\gamma) c_{fp}$, or always normal with an expected cost of $\gamma c_{fn}$~\cite{perini2023allocate}.
\end{definition}
\begin{theorem}\label{thm:cost_upperbound}
Let $c$ be a cost function as defined in Def.~\ref{defn:cost_function}, and $g$ be as in Def.~\ref{defn:rejection_rate_estimator}. Given a (test) example $x$ with score $s$, the expected example-wise cost is bounded by
\begin{equation}
\E_{x}[c] \le \min\{\gamma, A\} c_{fn} + (1-B)c_{fp} + (B-A) c_r,
\end{equation}
where $A = \hat{F}_{\psi_n}(g^{-1}\left(e^{-T}\right))$ and $B = \hat{F}_{\psi_n}(g^{-1}\left(1-e^{-T}\right))$ are as in Theorem~\ref{thm:expected_rejection_rate}.
\end{theorem}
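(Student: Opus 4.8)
The plan is to split the score axis into the three decision regions induced by \ourmethod{} and to bound each of the three cost contributions by the mass of the region responsible for it. Recall that $\confsymb{}=|2g(\psi_n)-1|$ and that $g$ is monotone increasing (a larger training frequency corresponds to a more anomalous score), so $g$ is invertible on its range. The rule rejects exactly when $\confsymb{}\le 1-2e^{-T}$, i.e.\ when $g(\psi_n)\in[e^{-T},1-e^{-T}]$. Hence, writing $a=g^{-1}(e^{-T})$ and $b=g^{-1}(1-e^{-T})$, a test example is accepted and labeled normal ($\hat{Y}=0$) when $\psi_n\le a$, is rejected ($\hat{Y}=\rr{}$) when $\psi_n\in(a,b]$, and is accepted and labeled anomaly ($\hat{Y}=1$) when $\psi_n>b$.

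First I would translate these three events into the quantities $A$ and $B$. Using the empirical cdf $\hat{F}_{\psi_n}$ as a proxy for the true $F_{\psi_n}$ (justified by the near-uniformity of $\psi_n$ and by $\hat{F}_{\psi_n}\approx F_{\psi_n}$, exactly as in Theorem~\ref{thm:expected_rejection_rate} and the Supplement uniformity results), I obtain $\Pp(\hat{Y}=0)\approx A$, $\Pp(\hat{Y}=1)\approx 1-B$, and $\Pp(\hat{Y}=\rr{})\approx B-A$, the last being precisely the estimated rejection rate $\hat{\mathcal{R}}$ of Theorem~\ref{thm:expected_rejection_rate}.

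The core of the argument is then three elementary subset bounds, one per term of $c$. The rejection contribution is exactly $c_r\,\Pp(\hat{Y}=\rr{})=(B-A)c_r$. For the false-positive contribution I use $\{\hat{Y}=1,\,Y=0\}\subseteq\{\hat{Y}=1\}$, so its probability is at most $\Pp(\hat{Y}=1)=1-B$, giving $(1-B)c_{fp}$. For the false-negative contribution I use that $\{\hat{Y}=0,\,Y=1\}$ is contained in both $\{\hat{Y}=0\}$ and $\{Y=1\}$; since $\Pp(\hat{Y}=0)\approx A$ and $\Pp(Y=1)=\gamma$ (the contamination factor), its probability is at most $\min\{A,\gamma\}$, giving $\min\{\gamma,A\}c_{fn}$. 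Summing the three bounds yields the claimed inequality.

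The main obstacles are not the subset steps but the bookkeeping around them. I must ensure $g$ is strictly monotone so that $g^{-1}(e^{-T})$ and $g^{-1}(1-e^{-T})$ are well defined and the three regions are genuine intervals in $\psi_n$; this is where the monotonicity of Eq.~\ref{eq:exceed_formula} in $\psi_n$ is needed. I also must be careful that the two error terms entering $c$ are effectively the \emph{joint} probabilities $\Pp(\hat{Y}=1,Y=0)$ and $\Pp(\hat{Y}=0,Y=1)$ — the reading consistent with the constraint $c_r\le\min\{(1-\gamma)c_{fp},\gamma c_{fn}\}$ and with the fact that always predicting anomaly incurs expected cost $(1-\gamma)c_{fp}$ — since the subset bounds produce marginals of the predicted label rather than label-conditional rates. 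Finally, because $\hat{F}_{\psi_n}$ replaces $F_{\psi_n}$, the resulting inequality is an estimate in the same sense as Theorem~\ref{thm:expected_rejection_rate}, holding for large $n$; making this precise rather than approximate would require propagating the uniformity and estimation error, which I would import from the Supplement rather than reprove here.
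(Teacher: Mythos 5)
Your proof is correct and follows essentially the same route as the paper's: decompose the expected cost into the rejection, false-positive, and false-negative contributions, replace the rejection mass by $(B-A)$ via the estimator of Theorem~\ref{thm:expected_rejection_rate}, and bound the two error terms by the accepted-positive mass $1-B$ and by $\min\{A,\gamma\}$ respectively, using exactly the inclusion arguments the paper invokes. Your remark that the error terms must be read as joint probabilities (rather than the label-conditional quantities the paper's notation literally writes) is a fair and careful clarification of the paper's bookkeeping, not a deviation from its argument.
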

\begin{proof}
We indicate the true label random variable as $Y$, and the non-rejected false positives and false negatives as, respectively,
\begin{equation*}
    FP = \Pp\left(\hat{Y}=1|Y=0, \confsymb{} > 1-2e^{-T}\right) \quad
    FN = \Pp\left(\hat{Y}=0|Y=1, \confsymb{} > 1-2e^{-T}\right)
\end{equation*}
Using Theorem~\ref{thm:expected_rejection_rate} results in 
\begin{equation*}
\E_{x}[c] = \E_{x}[c_{fn} FN + c_{fp} FP +c_r \mathcal{R}]= \E_{x}[c_{fn}FN] + \E_{x}[c_{fp}FP] + c_{r}(B-A)
\end{equation*}
where $A = \hat{F}_{\psi_n}(g^{-1}\left(e^{-T}\right))$, $B= \hat{F}_{\psi_n}(g^{-1}\left(1-e^{-T}\right))$ come from Theorem~\ref{thm:expected_rejection_rate}.
Now we observe that setting a decision threshold $\lambda$ such that $n\times \gamma$ scores are higher implies that, on expectation, the detector predicts a proportion of positives equal to $\gamma = \Pp(Y=1)$. Moreover, for $\varepsilon = 2e^{-T}$,
\begin{itemize}
    \item $FP \le \Pp\left(\hat{Y}=1|\confsymb{} > 1-\varepsilon\right) = 1-B$ as false positives must be less than total accepted positive predictions;
    \item $FN \le \gamma$ and $FN \le \Pp\left(\hat{Y}=0|\confsymb{} > 1-\varepsilon\right) = A$, as you cannot have more false negatives than positives ($\gamma$), nor than accepted negative predictions ($A$).
\end{itemize}
From these observations, we conclude that $\E_{x}[c] \le \min\{\gamma, A\} c_{fn} +(1-B) c_{fp} + (B - A) c_r$.
\end{proof}

\section{Related work}\label{sec:related_work}

There is no research on learning to reject in unsupervised anomaly detection. However, \textbf{three} main research lines are connected to this work.

\paragraph{1) Supervised methods.} If some labels are available, one can use traditional supervised approaches to add the reject option into the detector~\cite{conte2012ensemble,li2022pac}. Commonly, labels can be used to find the optimal rejection threshold in two ways: 1) by trading off the model performance (e.g., AUC) on the accepted examples with its rejection rate~\cite{hanczar2019performance,abbas2019accuracy}, or 2) by minimizing a cost function~\cite{nguyen2020reliable,charoenphakdee2021classification}, a risk function~\cite{geifman2019selectivenet,huang2020self}, or an error function~\cite{laroui2021define,korycki2019active}. Alternatively, one can include the reject option in the model and directly optimize it during the learning phase~\cite{shekhar2019binary,cortes2016learning,kocak2019safepredict}.

\paragraph{2) Self-Supervised methods.} If labels are not available, one can leverage self-supervised approaches to generate pseudo-labels in order to apply traditional supervised learning to reject methods~\cite{hojjati2022self,sehwag2021ssd,georgescu2021anomaly,li2021cutpaste}.
For example, one can employ any unsupervised anomaly detector to assign training labels, fit a (semi-)supervised detector (such as \textsc{DeepSAD}~\cite{ruff2019deep} or \textsc{Repen}~\cite{pang2018learning}) on the pseudo labels, compute a confidence metric~\cite{denis2020consistency}, and find the optimal rejection threshold by minimizing the cost function treating the pseudo-labels as the ground truth.


\paragraph{3) Optimizing unsupervised metrics.} There exist several unsupervised metrics (i.e., they can be computed without labels) for quantifying detector quality~\cite{ma2021large}. Because they do not need labels, one can find the rejection threshold by maximizing the margin between the detector's quality (computed using such metric) on the accepted and on the rejected examples~\cite{pugnana2023auc}. This allows us to obtain a model that performs well on the accepted examples and poorly on the rejected ones, which is exactly the same intuition that underlies the supervised approaches.
Some examples of existing unsupervised metrics (see~\cite{ma2021large}) are the following.
\textsc{Em} and \textsc{Mv}~\cite{goix2016evaluate} quantify the clusterness of inlier scores, where more compact scores indicate better models.
\textsc{Stability}~\cite{perini2020ranking} measures the robustness of anomaly detectors’ predictions by looking at how consistently they rank examples by anomalousness.
\textsc{Udr}~\cite{duan2019unsupervised} is a model-selection metric that selects the model with a hyperparameter setting that yields consistent results across various seeds, which can be used to set the rejection threshold through the analogy [hyperparameter, seed] and [rejection threshold, detectors].
Finally, \textsc{Ens}~\cite{rayana2016less,zimek2014ensembles} measures the detector trustworthiness as the ranking-based similarity (e.g., correlation) of a detector’s output to the ``pseudo ground truth'', computed via aggregating the output of an ensemble of detectors, which allows one to set the rejection threshold that maximizes the correlation between the detector's and the ensemble's outputs.

\section{Experiments}\label{sec:experiments}
We experimentally address the following research questions:
\begin{itemize}
    \item[\textbf{Q1.}] How does \ourmethod{}'s cost compare to the baselines?
    \item[\textbf{Q2.}] How does varying the cost function affect the results?
    \item[\textbf{Q3.}] How does \ourmethod{}'s CPU time compare to the baselines?
    \item[\textbf{Q4.}] Do the theoretical results hold in practice?
    \item[\textbf{Q5.}] Would \ourmethod{}'s performance significantly improve if it had access to training labels?
\end{itemize}

\subsection{Experimental Setup}

\paragraph{Methods.}

We compare \textbf{\ourmethod{}}\footnote{Code available at: \url{https://github.com/Lorenzo-Perini/RejEx}.} against $7$ baselines for setting the rejection threshold. These can be divided into three categories: no rejection, self-supervised, and unsupervised metric based.

We use one method \textbf{\textsc{NoReject}} that always makes predictions and never rejects (no reject option).

We consider one self-supervised approach \textbf{\textsc{SS-Repen}}~\cite{pang2018learning}. This uses (any) unsupervised detector to obtain pseudo labels for the training set. It then sets the rejection threshold as follows: 1) it creates a held-out validation set (20$\%$), 2) it fits \textsc{Repen}, a state-of-the-art (semi-)supervised anomaly detector on the training set with the pseudo labels, 3) it computes on the validation set the confidence values as the margin between \textsc{Repen}'s predicted class probabilities $|\mathbb{P}(Y=1|s)-\mathbb{P}(Y=0|s)|$, 4) it finds the optimal threshold $\tau$ by minimizing the total cost obtained on the validation set.

We consider $5$ approaches that employ an existing unsupervised metric to set the rejection threshold and hence do not require having access to  labels. 
\textbf{\textsc{Mv}}~\cite{goix2016evaluate}, \textbf{\textsc{Em}}~\cite{goix2016evaluate}, and \textbf{\textsc{Stability}}~\cite{perini2020ranking} are unsupervised metric-based methods based on stand-alone internal evaluations that use a single anomaly detector to measure its quality,
\textbf{\textsc{Udr}}~\cite{duan2019unsupervised} and \textbf{\textsc{Ens}}~\cite{rayana2016less} are unsupervised consensus-based metrics that an ensemble of detectors (all 12 considered in our experiments) to measure a detector's quality.\footnote{Sec.~\ref{sec:related_work} describes these approaches.}
We apply each of these $5$ baselines as follows. 1) We apply the unsupervised detector to assign an anomaly score to each train set example. 2) We convert these scores into class probabilities using~\cite{kriegel2011interpreting}. 3) We compute the confidence scores on the training set as difference between these probabilities: $|\mathbb{P}(Y=1|s)-\mathbb{P}(Y=0|s)|$.  4) We
evaluate possible thresholds on this confidence by computing the considered unsupervised metric on the accepted and on the rejected examples and select the threshold that maximizes the difference in the metric's value on these two sets of examples. This aligns with the common learning to reject criteria for picking a threshold~\cite{chow1970optimum,pugnana2023auc} such that the model performs well on the accepted examples and poorly on the rejected ones.

\paragraph{Data.}
We carry out our study on $34$ publicly available benchmark datasets, widely used in the literature~\cite{han2022adbench}. These datasets cover many application domains, including healthcare (e.g., disease diagnosis), audio and language processing (e.g., speech recognition), image processing (e.g., object identification), and finance (e.g., fraud detection). To limit the computational time, we randomly sub-sample $20,000$ examples from all large datasets. Table~\ref{tab:properties_datasets_supplement} in the Supplement provides further details.

\paragraph{Anomaly Detectors and Hyperparameters.}
We set our tolerance $\varepsilon = 2e^{-T}$ with $T=32$. Note that the exponential smooths out the effect of $T\ge 4$, which makes setting a different $T$ have little impact. We use a set of $12$ unsupervised anomaly detectors implemented in \textsc{PyOD}~\cite{zhao2019pyod} with default hyperparameters~\citep{soenen2021effect} because the unsupervised setting does not allow us to tune them: \textsc{Knn}~\citep{angiulli2002fast}, 
\textsc{IForest}~\citep{liu2012isolation}, 
\textsc{Lof}~\citep{breunig2000lof}, 
\textsc{Ocsvm}~\citep{scholkopf2001estimating}, 
\textsc{Ae}~\citep{chen2018autoencoder},
\textsc{Hbos}~\citep{goldstein2012histogram}, 
\textsc{Loda}~\citep{pevny2016loda}, 
\textsc{Copod}~\citep{li2020copod}, 
\textsc{Gmm}~\citep{aggarwal2017introduction}, 
\textsc{Ecod}~\citep{li2022ecod}, 
\textsc{Kde}~\citep{latecki2007outlier}, \textsc{Inne}~\citep{bandaragoda2018isolation}. 
We set all the baselines' rejection threshold via Bayesian Optimization with $50$ calls~\cite{frazier2018tutorial}.

\paragraph{Setup.}
For each [dataset, detector] pair, we proceed as follows: (1) we split the dataset into training and test sets (80-20) using $5$ fold cross-validation; (2) we use the detector to assign the anomaly scores on the training set; (3) we use either \ourmethod{} or a baseline to set the rejection threshold; 
(4) we measure the total cost on the test set using the given cost function.
We carry out a total of $34\times12\times5 = 2040$ experiments. All experiments were run on an Intel(R) Xeon(R) Silver 4214 CPU.

\subsection{Experimental Results}
\begin{figure*}[htbp]
\begin{center}
\centerline{\includegraphics[width=\textwidth]{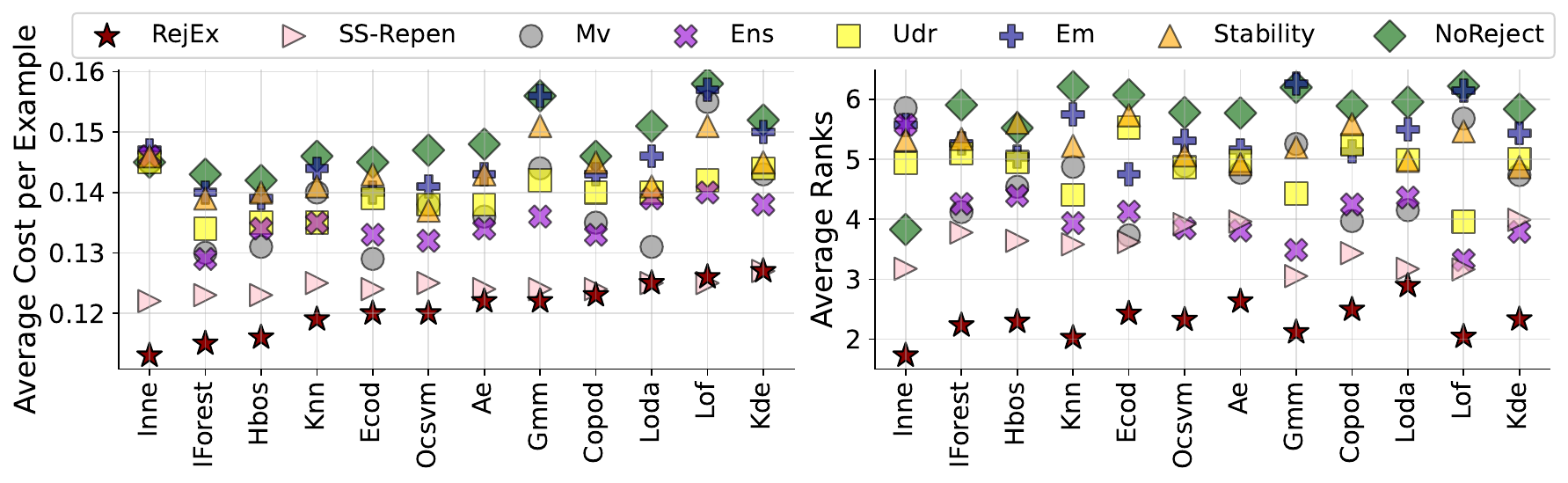}}
\caption{Average cost per example (left) and rank (right) aggregated per detector (x-axis) over all the datasets. Our method obtains the lowest (best) cost for $9$ out of $12$ detectors and it always has the lowest (best) ranking position for $c_{fp}=c_{fn}=1$, $c_r = \gamma$.}
\label{fig:q1_avgcost}
\end{center}
\end{figure*}

\paragraph{Q1: \ourmethod{} against the baselines.}
Figure~\ref{fig:q1_avgcost} shows the comparison between our method and the baselines, grouped by detector, when setting the costs $c_{fp}=c_{fn}=1$ and $c_r = \gamma$ (see the Supplement for further details). \ourmethod{} achieves the lowest (best) cost per example for $9$ out of $12$ detectors (left-hand side) and similar values to \textsc{SS-Repen} when using \textsc{Loda}, \textsc{Lof} and \textsc{Kde}. Averaging over the detectors, \ourmethod{} reduces the relative cost by more than $5\%$ vs \textsc{SS-Repen}, $11\%$ vs \textsc{Ens}, $13\%$ vs \textsc{Mv} and \textsc{Udr}, $17\%$ vs \textsc{Em}, $19\%$ vs \textsc{NoReject}. 
Table~\ref{tab:cost_values_supplement} (Supplement) shows a detailed breakdown.

For each experiment, we rank all the methods from $1$ to $8$, where position $1$ indicates the lowest (best) cost.
The right-hand side of Figure~\ref{fig:q1_avgcost} shows that \ourmethod{} always obtains the lowest average ranking.
We run a statistical analysis separately for each detector: the Friedman test rejects the null-hypothesis that all methods perform similarly (p-value $< e^{-16}$) for all the detectors. The ranking-based post-hoc Bonferroni-Dunn statistical test~\cite{demvsar2006statistical} with $\alpha = 0.05$ finds that \ourmethod{} is significantly better than the baselines for $6$ detectors (\textsc{Inne}, \textsc{IForest}, \textsc{Hbos}, \textsc{Knn}, \textsc{Ecod}, \textsc{Ocsvm}). 

\begin{figure*}[htbp]
\begin{center}
\centerline{\includegraphics[width=\textwidth]{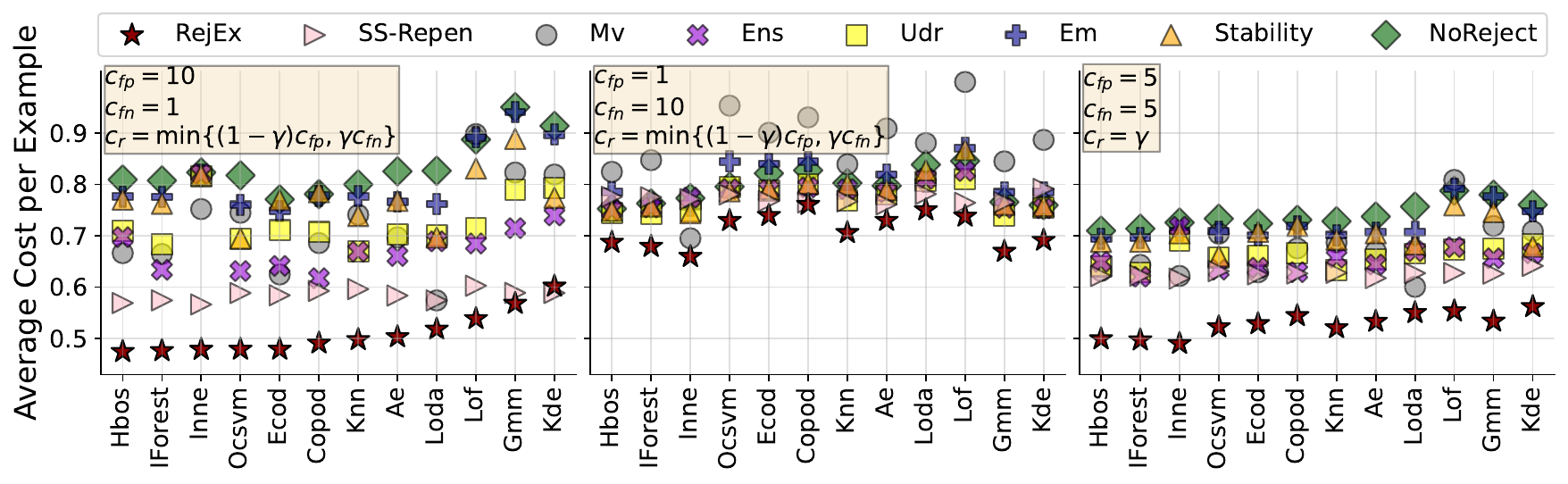}}
\caption{Average cost per example aggregated by detector over the $34$ datasets when varying the three costs on three representative cases: (left) false positives are penalized more, (center) false negatives are penalized more, (right) rejection has a lower cost than FPs and FNs.}
\label{fig:q2_varyingcost}
\end{center}
\end{figure*}

\paragraph{Q2. Varying the costs $c_{fp}$, $c_{fn}$, $c_r$.}
The three costs $c_{fp}$, $c_{fn}$, and $c_r$ are usually set based on domain knowledge: whether to penalize the false positives or the false negatives more depends on the application domain. Moreover, the rejection cost needs to satisfy the constraint $c_r \le \min \{(1-\gamma)c_{fp}, \gamma c_{fn} \}$~\cite{perini2023allocate}. Therefore, we study their impact on three representative cases: (case 1) high false positive cost ($c_{fp}=10$, $c_{fn}=1$, $c_r = \min \{10(1-\gamma), \gamma$), (case 2) high false negative cost ($c_{fp}=1$, $c_{fn}=10$, $c_r = \min \{(1-\gamma), 10\gamma$), and (case 3) same cost for both mispredictions but low rejection cost ($c_{fp}=5$, $c_{fn}=5$, $c_r = \gamma$). Note that scaling all the costs has no effect on the relative comparison between the methods, so the last case is equivalent to $c_{fp}=1$, $c_{fn}=1$, and $c_r = \gamma/5$. 

Figure~\ref{fig:q2_varyingcost} shows results for the three scenarios. Compared to the unsupervised metric-based methods, the left plot shows that our method is clearly the best for high false positives cost: for $11$ out of $12$ detectors, \ourmethod{} obtains both the lowest (or similar for $\textsc{Gmm}$) average cost and the lowest average ranking position. This indicates that using \ourmethod{} is suitable when false alarms are expensive. Similarly, the right plot illustrates that \ourmethod{} outperforms all the baselines for all the detectors when the rejection cost is low (w.r.t. the false positive and false negative costs). Even when the false negative cost is high (central plot), \ourmethod{} obtains the lowest average cost for $11$ detectors and has \underline{always} the lowest average rank per detector. 
See the Supplement (Table~\ref{tab:diff_cost_supplement} and~\ref{tab:diff_rank_supplement}) for more details.


\paragraph{Q3. Comparing the CPU time.}

\begin{table}[htpb]
\caption{Average CPU time (in ms) per training example ($\pm$ std) to set the rejection threshold aggregated over all the datasets when using \textsc{IForest}, \textsc{Hbos}, and \textsc{Copod} as unsupervised anomaly detector. \ourmethod{} has a lower time than all the methods but \textsc{NoReject}, which uses no reject option.}
\label{tab:cputime}
\begin{center}
\begin{small}
\setlength{\tabcolsep}{3pt}
\begin{tabular}{l|cccccccc}
\toprule
  & \multicolumn{8}{c}{CPU time in ms (mean $\pm$ std.)} \\
\textsc{Detector} &\textsc{NoReject} &\textbf{\ourmethod{}} &\textsc{SS-Repen} &\textsc{Mv} &\textsc{Em} &\textsc{Udr} &\textsc{Ens} &\textsc{Stability} \\
\midrule
\textsc{IForest}  & 0.0$\pm$0.0 &\textbf{0.06$\pm$0.22} &90$\pm$68 &89$\pm$128 &155$\pm$161 &120$\pm$132 &122$\pm$135 &916$\pm$900
\\
\textsc{Hbos}  & 0.0$\pm$0.0 &\textbf{0.13$\pm$0.93} &89$\pm$53 &39$\pm$81 &80$\pm$129 &200$\pm$338 &210$\pm$358 &142$\pm$242\\
\textsc{Copod}  & 0.0$\pm$0.0 &\textbf{0.04$\pm$0.04} &84$\pm$53 &21$\pm$28 &81$\pm$60 &119$\pm$131 &123$\pm$138 &140$\pm$248\\
\bottomrule
\end{tabular}
\end{small}
\end{center}
\end{table}

Table~\ref{tab:cputime} reports CPU time in milliseconds per training example aggregated over the $34$ datasets needed for each method to set the rejection threshold on three unsupervised anomaly detectors (\textsc{IForest}, \textsc{Hbos}, \textsc{Copod}). \textsc{NoReject} has CPU time equal to $0$ because it does not use any reject option. \ourmethod{} takes just a little more time than \textsc{NoReject} because computing \exceed{} has linear time while setting a constant threshold has constant time. In contrast, all other methods take $1000\times$ longer because they evaluate multiple thresholds. For some of these (e.g., \textsc{Stability}), this involves an expensive internal procedure. 

\begin{figure*}[htbp]
\begin{center}
\centerline{\includegraphics[width=\textwidth]{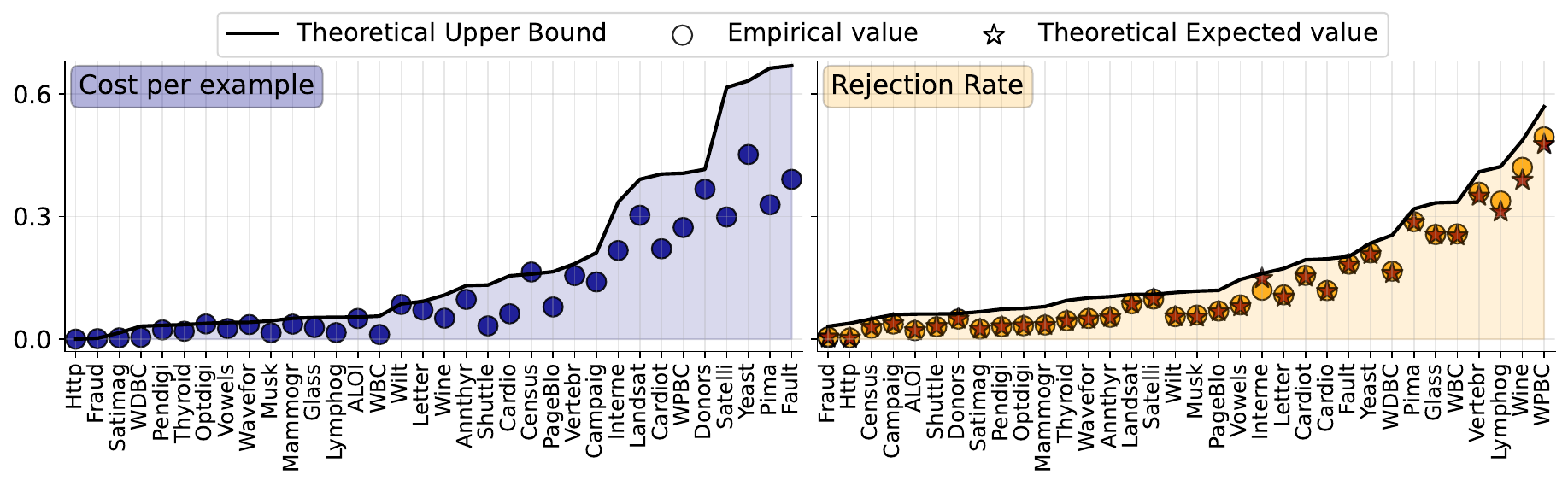}}
\caption{Average cost per example (left) and average rejection rate (right) at test time aggregated by dataset over the $12$ detectors. In both plots, the empirical value (circle) is always lower than the predicted upper bound (continuous black line), which makes it consistent with the theory. On the right, the expected rejection rates (stars) are almost identical to the empirical values.}
\label{fig:q4_theory}
\end{center}
\end{figure*}

\paragraph{Q4. Checking on the theoretical results.}
Section~\ref{sec:methodology} introduces three theoretical results: the rejection rate estimate (Theorem~\ref{thm:expected_rejection_rate}), and the upper bound for the rejection rate (Theorem~\ref{thm:rejection_rate_bounded}) and for the cost (Theorem~\ref{thm:cost_upperbound}).
We run experiments to verify whether they hold in practice. Figure~\ref{fig:q4_theory} shows the results aggregated over the detectors. The left-hand side confirms that the prediction cost per example (blue circle) is always $\le$ than the upper bound (black line). Note that the upper bound is sufficiently strict, as in some cases it equals the empirical cost (e.g., Census, Wilt, Optdigits). The right-hand side shows that our rejection rate estimate (orange star) is almost identical to the empirical rejection rate (orange circle) for most of the datasets, especially the large ones. On the other hand, small datasets have the largest gap, e.g., Wine ($n=129$), Lymphography ($n=148$), WPBC ($n=198$), Vertebral ($n=240$). Finally, the empirical rejection rate is always lower than the theoretical upper bound (black line), which we compute by using the empirical frequencies $\psi_n$.

\begin{table}[htpb]
\caption{Mean $\pm$ std. for the \textbf{cost per example} (on the left) and the \textbf{rejection rate} (on the right) at test time on a per detector basis and aggregated over the datasets.}
\label{tab:oracle}
\begin{center}
\begin{small}
\begin{sc}
\begin{tabular}{l|c|c||c|c}
\toprule
{} & \multicolumn{2}{c||}{\textbf{Cost per example} (mean $\pm$ std.)} & \multicolumn{2}{c}{\textbf{Rejection Rate} (mean $\pm$ std.)}\\
Detector &\ourmethod{} &\textsc{Oracle} &\ourmethod{} &\textsc{Oracle}\\
\midrule
\textsc{Ae} &  0.126 $\pm$ 0.139 &  0.126 $\pm$ 0.139 &  0.131 $\pm$ 0.132 &  0.118 $\pm$ 0.125 \\
\textsc{Copod} &   0.123 $\pm$ 0.140 &   0.121 $\pm$ 0.140 &  0.123 $\pm$ 0.131 &  0.101 $\pm$ 0.114 \\
\textsc{Ecod} &  0.119 $\pm$ 0.138 &  0.118 $\pm$ 0.138 &   0.125 $\pm$ 0.130 &  0.107 $\pm$ 0.114 \\
\textsc{Gmm} &  0.123 $\pm$ 0.135 &  0.122 $\pm$ 0.134 &  0.139 $\pm$ 0.143 &  0.132 $\pm$ 0.136 \\
\textsc{Hbos} &  0.118 $\pm$ 0.129 &  0.118 $\pm$ 0.129 &  0.139 $\pm$ 0.148 &  0.114 $\pm$ 0.128 \\
\textsc{IForest} &  0.118 $\pm$ 0.129 &  0.118 $\pm$ 0.128 &  0.127 $\pm$ 0.131 &   0.118 $\pm$ 0.130\\
\textsc{Inne} &  0.115 $\pm$ 0.129 &  0.115 $\pm$ 0.128 &  0.132 $\pm$ 0.132 &  0.122 $\pm$ 0.125 \\
\textsc{Kde} &   0.129 $\pm$ 0.140 &  0.129 $\pm$ 0.139 &  0.121 $\pm$ 0.129 &   0.105 $\pm$ 0.120 \\
\textsc{Knn} &  0.119 $\pm$ 0.123 &  0.118 $\pm$ 0.123 &  0.127 $\pm$ 0.129 &  0.112 $\pm$ 0.117 \\
\textsc{Loda} &  0.125 $\pm$ 0.133 &   0.122 $\pm$ 0.130 &  0.126 $\pm$ 0.124 &   0.110 $\pm$ 0.114 \\
\textsc{Lof} &  0.126 $\pm$ 0.131 &  0.125 $\pm$ 0.131 &  0.129 $\pm$ 0.126 &  0.118 $\pm$ 0.115 \\
\textsc{Ocsvm} &   0.120 $\pm$ 0.131 &   0.120 $\pm$ 0.131 &  0.126 $\pm$ 0.128 &  0.107 $\pm$ 0.115 \\
\midrule
\textsc{Avg.} &  0.122 $\pm$ 0.133 &  0.121 $\pm$ 0.133 &  0.129 $\pm$ 0.132 &  0.114 $\pm$ 0.121 \\
\bottomrule
\end{tabular}
\end{sc}
\end{small}
\end{center}
\end{table}

\paragraph{Q5. Impact of training labels on \ourmethod{}.} We simulate having access to the training labels and include an extra baseline: \textsc{Oracle} uses \exceed{} as a confidence metric and sets the (optimal) rejection threshold by minimizing the cost function using the training labels. Table~\ref{tab:oracle} shows the average cost and rejection rates at test time obtained by the two methods. Overall, \ourmethod{} obtains an average cost that is only $0.6\%$ higher than \textsc{Oracle}'s cost. On a per-detector basis, \ourmethod{} obtains a $2.5\%$ higher cost in the worst case (with \textsc{Loda}), while getting only a $0.08\%$ increase in the best case (with \textsc{Kde}). Comparing the rejection rates, \ourmethod{} rejects on average only $\approx 1.5$ percentage points more examples than \textsc{Oracle} ($12.9\%$ vs $11.4\%$). The supplement provides further details.

\section{Conclusion and Limitations}\label{sec:conclusion}
This paper addressed learning to reject in the context of unsupervised anomaly detection. The key challenge was how to set the rejection threshold without access to labels which are required by all existing approaches  We proposed an approach \ourmethod{} that exploits our novel theoretical analysis of the \exceed{} confidence metric. Our new analysis shows that it is possible to set a constant rejection threshold and that doing so offers strong theoretical guarantees. First, we can estimate the proportion of rejected test examples and provide an upper bound for our estimate. Second, we can provide a theoretical upper bound on the expected test-time prediction cost per example. Experimentally, we compared \ourmethod{} against several (unsupervised) metric-based methods and showed that, for the majority of anomaly detectors, it obtained lower (better) cost. Moreover, we proved that our theoretical results hold in practice and that our rejection rate estimate is almost identical to the true value in the majority of cases.


\textbf{Limitations.}
Because \ourmethod{} does not rely on labels, it can only give a coarse-grained view of performance. For example, in many applications anomalies will have varying costs (i.e., there are instance-specific costs) which we cannot account for. Moreover, \ourmethod{} has a strictly positive rejection rate, which may increase the cost of a highly accurate detector. However, this happens only in $\approx 5\%$ of our experiments.

\section*{Acknowledgements}

This research is supported by an FB Ph.D. fellowship by FWO-Vlaanderen (grant 1166222N) [LP], the Flemish Government under the “Onderzoeksprogramma Artificiële Intelligentie (AI) Vlaanderen” programme [LP,JD], and KUL Research Fund iBOF/21/075 [JD].

\bibliography{bibliography}

\newpage
\appendix
\onecolumn
\section*{Supplement}\label{sec:supplement}
In this supplementary material we (1) provide additional theorems and proofs for Section~\ref{sec:methodology}, and (2) further describe the experimental results.


\section{Theoretical Results}
Firstly, we provide the proof for Theorem~\ref{thm:confidence_bounded}.

\textbf{Theorem~\ref{thm:confidence_bounded}}\ (Analysis of \exceed{})
Let $s$ be an anomaly score, and $\psi_n \in [0,1]$ the proportion of training scores $\le s$. For $T\! \ge \! 4$, there exist $t_1 = t_1(n,\gamma,T)\! \in \![0,1]$, $t_2 = t_2(n,\gamma,T)\! \in \![0,1]$ such that
\begin{equation*}
\psi_n \in [t_1,t_2] \implies \confsymb{} \le 1- 2e^{-T}.
\end{equation*}

\begin{proof}
We split this proof into two parts: we show that the reverse inequalities, i.e. that \textbf{(a)} if $\psi_n \le t_1$, then $\confsymb{} \ge 1- 2e^{-T}$, and \textbf{(b)} if $\psi_n \ge t_2$, then $\confsymb{} \ge 1-2e^{-T}$, hold and prove the final statement because $\Pp(\hat{Y} = 1|s)$ is monotonic increasing on $s$.

\textbf{(a)} The probability $\Pp(\hat{Y} = 1|s)$ (as in Eq.~\ref{eq:exceed_formula}) can be seen as the cumulative distribution $F$ of a binomial random variable $\mathcal{B}(q_s,n)$ with at most $n\gamma-1$ successes out of $n$ trials, with $q_s = \frac{n(1-\psi_n)+1}{2+n}$ as the success probability. By applying Hoeffding's inequality, we obtain the upper bound
\begin{equation*}
\Pp(\hat{Y} = 1|s) \le \exp \! \left(\!-2n \left(\frac{n(1-\psi_n)+1}{2+n} - \frac{n\gamma -1}{n}\right)^2\right)
\end{equation*}
that holds for the constraint $\psi_n \le \frac{2+n}{n^2} + \frac{1 - 2\gamma}{n} + (1-\gamma)$. Because $\Pp(\hat{Y} = 1|s) \le e^{-T}$ implies that $\confsymb{} \ge 1-2e^{-T}$, we search for the values of $\psi_n$ such that the upper bound is $\le e^{-T}$. Forcing the upper bound $\le e^{-T}$ results in 
\begin{equation*}
    2n \left(\frac{n(1-\psi_n)+1}{2+n} - \frac{n\gamma -1}{n}\right)^2 -T \ge 0,
\end{equation*}
which is satisfied for ($I_1$) $0\le \psi_n \le A_1 - \sqrt{B_1}$ and ($I_2$) $A_1 + \sqrt{B_1} \le \psi_n \le 1$, where
\begin{equation*}
A_1 = \frac{2 + n(n+1)(1-\gamma)}{n^2} \quad \quad 
B_1 = \frac{2n\left(-3\gamma^2\!-2n(1-\gamma)^2\!+4\gamma-3\right)\!+T(n+2)^2 \! - \!8}{2n^3}.
\end{equation*}
However, for $T\ge 4$, no values of $n$, $\gamma$, and $T$ that satisfy the constraint on $\psi_n$ also satisfy $I_2$. Moving to $I_1$, we find out that if $\psi_n$ satisfies $I_1$, then it also satisfies the constraint on $\psi_n$ for any $n$, $\gamma$, and $T$. Therefore, we we set $t_1(n,\gamma,T) = A_1 - \sqrt{B_1}$. As a result,
\begin{equation*}
     \psi_n \le t_1 \implies \Pp(\hat{Y} = 1|s) \le e^{-T} \implies \confsymb{} \ge 1-2e^{-T}.
\end{equation*}

\textbf{(b)} Similarly, $\Pp(\hat{Y} = 0|s)$ can be seen as the cumulative distribution $F$ of $\mathcal{B}(p_s, n)$, with $n(1-\gamma)$ successes and $p_s = \frac{1+n\psi_n(s)}{2+n}$. By seeing the binomial as a sum of Bernoulli random variables, and using the property of its cumulative distribution $F(n(1-\gamma), n, p_s) + F(n\gamma-1, n, 1-p_s) = 1$, we apply the Hoeffding's inequality and compare such upper bound to the $e^{-T}$. We obtain
\begin{equation*}
    2n \left(\frac{1+\psi_n n}{2+n} - (1-\gamma)\right)^2 -T \ge 0
\end{equation*}
that holds with the constraint $\psi_n \ge \frac{(2+n)(1-\gamma) -1}{n}$.
The quadratic inequality in $\psi_n$ has solutions for ($I_1$) $0\le \psi_n \le A_2 - \sqrt{B_2}$ and ($I_2$) $A_2 + \sqrt{B_2} \le \psi_n \le 1$, where
$A_2= \frac{(2+n)(1-\gamma) -1}{n}$, and $B_2= \frac{T(n+2)^2}{2n^3}$.
However, the constraint limits the solutions to $I_2$, i.e. for $\psi_n \ge A_2+\sqrt{B_2}$. Thus, we set $t_2(n,\gamma,T) = A_2 + \sqrt{B_2}$
and conclude that
\begin{equation*}
    \psi_n \ge t_2 \! \implies \! \Pp(\hat{Y} = 1|s) \ge 1-e^{-T} \! \! \implies \! \! \confsymb{} \ge 1-2e^{-T}.
\end{equation*}
\end{proof}

Secondly, Theorem~\ref{thm:rejection_rate_bounded} relies on two important results: given $S$ the anomaly score random variable, (1) if $\psi_n$ was the \emph{theoretical} cumulative of $S$, it would have a uniform distribution (Theorem~\ref{thm:tau_is_uniform}), but because in practice (2) $\psi_n$ is the \emph{empirical} cumulative of $S$, its distribution is close to uniform with high probability (Theorem~\ref{thm:empirical_tau}). We prove these results in the following theorems.

\begin{theorem}\label{thm:tau_is_uniform}
Let $S$ be the anomaly score random variable, and $\psi = F_S(S)$ be the cumulative distribution of $S$ applied to $S$ itself. Then $\psi \sim Unif(0,1)$.
\end{theorem}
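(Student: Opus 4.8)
The plan is to recognize this statement as the classical \emph{probability integral transform} and to prove it by directly computing the cumulative distribution function of $\psi = F_S(S)$. Since $\psi$ takes values in $[0,1]$, it suffices to show that $\Pp(\psi \le u) = u$ for every $u \in [0,1]$, because this is exactly the CDF of the $Unif(0,1)$ law.

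First I would fix $u \in [0,1]$ and write
\begin{equation*}
\Pp(\psi \le u) = \Pp\big(F_S(S) \le u\big).
\end{equation*}
Assuming $S$ is a continuous random variable so that $F_S$ is continuous and (on the support of $S$) strictly increasing, the inverse $F_S^{-1}$ exists, and the event $\{F_S(S) \le u\}$ coincides with $\{S \le F_S^{-1}(u)\}$. Applying $F_S$ then gives
\begin{equation*}
\Pp\big(S \le F_S^{-1}(u)\big) = F_S\big(F_S^{-1}(u)\big) = u.
\end{equation*}
Hence $\Pp(\psi \le u) = u$ on $[0,1]$, which identifies the distribution of $\psi$ as $Unif(0,1)$ and completes the argument.

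The main obstacle is the regularity of $F_S$: the clean inversion step above requires $F_S$ to be continuous and strictly increasing, and anomaly-score distributions need not satisfy this globally (flat pieces break injectivity, jumps break continuity). To handle the general case rigorously I would replace $F_S^{-1}$ by the generalized (quantile) inverse $F_S^{-1}(u) = \inf\{s : F_S(s) \ge u\}$ and invoke the standard identity $\{F_S(S) \le u\} = \{S \le F_S^{-1}(u)\}$ up to a $\Pp$-null set, together with the right-continuity of $F_S$, to still obtain $\Pp(\psi \le u) = u$. In the context of this paper, however, it is natural and sufficient to assume the scores $S$ are drawn from a continuous distribution, in which case the elementary inversion argument applies verbatim; I would state this continuity assumption explicitly and then conclude as above.
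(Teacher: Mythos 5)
Your argument is the same probability integral transform computation the paper uses: evaluate $\Pp(F_S(S)\le u)$ by inverting $F_S$ and conclude $\Pp(\psi\le u)=u$ on $[0,1]$. You are in fact more careful than the paper, which silently assumes $F_S$ is continuous and strictly increasing; your remarks on generalized inverses address a gap the paper leaves implicit, but the core proof is identical.
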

\begin{proof}
We prove that, if $\psi\sim Unif(0,1)$, then $F_{\psi}(t) = t$ for any $t\in[0,1]$:
\begin{equation*}
F_{\psi}(t) = \Pp(\psi \le t) = \Pp(F_S(S) \le t) = \Pp(S \le F_S^{-1}(t)) = F_S( F_S^{-1}(t)) = t \ \implies \ \psi\sim Unif(0,1).
\end{equation*}
\end{proof}

\begin{theorem}\label{thm:empirical_tau}
Let $\psi$ be as in Theorem~\ref{thm:tau_is_uniform}, and $F_{\psi_n}$ be its empirical distribution obtained from a sample of size $n$. For any small $\delta>0$ and $t\in [0,1]$, with probability $>1-\delta$
\begin{equation*}
F_{\psi_n}(t) \in \left[F_{\psi}(t) - \sqrt{\frac{\ln{\frac{2}{\delta}}}{2n}}, F_{\psi}(t) + \sqrt{\frac{\ln{\frac{2}{\delta}}}{2n}} \right].
\end{equation*}
\end{theorem}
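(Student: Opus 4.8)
The plan is to reduce the claim to a standard concentration inequality for a sum of i.i.d.\ bounded random variables, since the statement is precisely a pointwise bound on the deviation between the empirical and theoretical cumulative distributions of $\psi$ at a fixed point $t$.

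First I would fix $t \in [0,1]$ and write the empirical distribution explicitly as an average of indicators. Given the sample $\psi_1,\dots,\psi_n$ drawn i.i.d.\ from the distribution of $\psi$, we have
\[
F_{\psi_n}(t) = \frac{1}{n}\sum_{i=1}^n \mathds{1}[\psi_i \le t].
\]
Setting $X_i = \mathds{1}[\psi_i \le t]$, each $X_i$ is a Bernoulli random variable taking values in $\{0,1\} \subseteq [0,1]$ with $\E[X_i] = \Pp(\psi_i \le t) = F_{\psi}(t)$. Hence $F_{\psi_n}(t)$ is the sample mean of $n$ i.i.d.\ random variables bounded in $[0,1]$ whose common expectation is exactly $F_{\psi}(t)$, which is the setting to which Hoeffding's inequality directly applies.

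Next I would invoke Hoeffding's inequality in its two-sided form: for any $\epsilon > 0$,
\[
\Pp\left(\left|F_{\psi_n}(t) - F_{\psi}(t)\right| \ge \epsilon\right) \le 2\exp\left(-2n\epsilon^2\right).
\]
Equating the right-hand side to $\delta$ and solving for $\epsilon$ yields $\epsilon = \sqrt{\ln(2/\delta)/(2n)}$, so that $\left|F_{\psi_n}(t) - F_{\psi}(t)\right| < \sqrt{\ln(2/\delta)/(2n)}$ holds with probability greater than $1-\delta$. Unfolding the absolute value into the symmetric two-sided interval then produces exactly the claimed containment.

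There is no genuine obstacle here: the result is the \emph{pointwise} (single-$t$) analogue of the Dvoretzky--Kiefer--Wolfowitz bound, which follows from Hoeffding alone and does not require the uniform DKW inequality or any empirical-process machinery. The only points to state carefully are that the indicators $X_i$ are i.i.d.\ because the underlying sample is, and that the factor of $2$ inside the logarithm arises from taking a union over the two one-sided tails. Note that pointwise control at each fixed $t$ is all that Theorem~\ref{thm:rejection_rate_bounded} needs, since it applies the bound at the two endpoints $t_1$ and $t_2$ separately.
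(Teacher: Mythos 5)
Your proof is correct, but it takes a genuinely different route from the paper. The paper invokes the uniform Dvoretzky--Kiefer--Wolfowitz inequality (with Massart's constant), bounding $\sup_{t\in[0,1]}|F_{\psi_n}(t)-F_{\psi}(t)|$ in one shot, whereas you prove only the pointwise deviation at a fixed $t$ via Hoeffding applied to the i.i.d.\ indicators $\mathds{1}[\psi_i\le t]$. Since the theorem is literally stated for a fixed $t$, your argument suffices for the statement as written, and it is more elementary: the factor $2$ in your bound comes from the two one-sided tails rather than from the empirical-process constant, and the two bounds happen to coincide numerically. The difference matters only downstream: in the proof of Theorem~\ref{thm:rejection_rate_bounded} the bound is applied at $t_1$ and $t_2$ \emph{simultaneously} on the same event, which the uniform DKW version delivers with a single $\delta$; with your pointwise version you would need a union bound over the two endpoints, degrading the confidence to $1-2\delta$ (or requiring $\delta/2$ at each point), so your closing remark that pointwise control ``is all that Theorem~\ref{thm:rejection_rate_bounded} needs'' is slightly too quick. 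One further caveat, common to both proofs: the argument treats $\psi_1,\dots,\psi_n$ as an i.i.d.\ sample from the law of $\psi$, whereas in the paper's application the $\psi_n(s_i)$ are empirical ranks computed from the very same training scores, so the i.i.d.\ assumption is an idealization rather than something either proof establishes.
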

\begin{proof}
For any $\varepsilon>0$, the DKW inequality implies
\begin{equation*}
    \Pp\left(\sup_{t \in [0,1]} | F_{\psi_n}(t) - F_{\psi}(t) | > \varepsilon \right) \le 2 \exp \left(-2n \varepsilon^2\right).
\end{equation*}
By setting $\delta = 2 \exp \left(-2n \varepsilon^2\right)$, i.e. $\varepsilon = \sqrt{\frac{\ln{\frac{2}{\delta}}}{2n}}$, and using the complementary probability we conclude that
\begin{equation*}
    \Pp\left(\sup_{t \in [0,1]} | F_{\psi_n}(t) - F_{\psi}(t) | \le \sqrt{\frac{\ln{\frac{2}{\delta}}}{2n}} \right) > 1-\delta.
\end{equation*}
\end{proof}

\section{Experiments}
\paragraph{Data.} Table~\ref{tab:properties_datasets_supplement} shows the properties of the $34$ datasets used for the experimental comparison, in terms of number of examples, features, and contamination factor $\gamma$. 
The datasets can be downloaded in the following link: \url{https://github.com/Minqi824/ADBench/tree/main/datasets/Classical}.

\begin{table}[t]
\caption{Properties (number of examples, features, and contamination factor) of the $34$ benchmark datasets used for the experiments.}
\label{tab:properties_datasets_supplement}
\vskip 0.15in
\begin{center}
\begin{small}
\begin{sc}
\begin{tabular}{lrrr}
\toprule
Dataset & \#Examples & \#Features & $\gamma$ \\
\midrule
Aloi &     20000 &        27 &        0.0315 \\
Annthyroid &      7062 &         6 &      0.0756 \\
Campaign &     20000 &        62 &        0.1127 \\
Cardio &      1822 &        21 &      0.0960 \\
Cardiotocography &      2110 &        21 &      0.2204 \\
Census &     20000 &       500 &        0.0854 \\
Donors &     20000 &        10 &       0.2146 \\
Fault &      1941 &        27 &      0.3467 \\
Fraud &     20000 &        29 &       0.0021 \\
Glass &       213 &         7 &      0.0423 \\
Http &     20000 &         3 &        0.0004 \\
InternetAds &      1966 &      1555 &      0.1872 \\
Landsat &      6435 &        36 &      0.2071 \\
Letter &      1598 &        32 &      0.0626 \\
Lymphography &       148 &        18 &      0.0405 \\
Mammography &      7848 &         6 &      0.0322 \\
Musk &      3062 &       166 &      0.0317 \\
Optdigits &      5198 &        64 &      0.0254 \\
PageBlocks &      5393 &        10 &      0.0946 \\
Pendigits &      6870 &        16 &      0.0227 \\
Pima &       768 &         8 &      0.3490 \\
Satellite &      6435 &        36 &      0.3164 \\
Satimage &      5801 &        36 &      0.0119 \\
Shuttle &     20000 &         9 &       0.0725 \\
Thyroid &      3656 &         6 &      0.0254 \\
Vertebral &       240 &         6 &         0.1250 \\
Vowels &      1452 &        12 &       0.0317 \\
Waveform &      3443 &        21 &      0.0290 \\
WBC &       223 &         9 &      0.0448 \\
WDBC &       367 &        30 &      0.0272 \\
Wilt &      4819 &         5 &      0.0533 \\
Wine &       129 &        13 &      0.0775 \\
WPBC &       198 &        33 &      0.2374 \\
Yeast &      1453 &         8 &      0.3310 \\
\bottomrule
\end{tabular}
\end{sc}
\end{small}
\end{center}
\vskip -0.1in
\end{table}

\paragraph{Q1. \ourmethod{} against the baselines.} Table~\ref{tab:cost_values_supplement} and Table~\ref{tab:rank_values_supplement} show the results (mean $\pm$ std) aggregated by detectors in terms of, respectively, cost per example and ranking position. Results confirm that \ourmethod{} obtains an average cost per example lower than all the baselines for $9$ out of $12$ detectors, which is similar to the runner-up \textsc{SS-Repen} for the remaining $3$ detectors. Moreover, \ourmethod{} has always the best (lowest) average ranking position. 

\paragraph{Q2. Varying the costs $c_{fp}$, $c_{fn}$, $c_r$.} Table~\ref{tab:diff_cost_supplement} and Table~\ref{tab:diff_rank_supplement} show the average cost per example and the ranking position (mean $\pm$ std) aggregated by detectors for three representative cost functions, as discussed in the paper. Results are similar in all three cases. For high false positives cost ($c_{fp} = 10$), \ourmethod{} obtains an average cost per example lower than all the baselines for $11$ out of $12$ detectors and always the best average ranking position. For high false negative cost ($c_{fn} = 10$) as well as for low rejection cost ($c_{fp} = 5, c_{fn} = 5, c_r = \gamma$), it has the lowest average cost for all detectors and always the best average ranking. Moreover, when rejection is highly valuable (low cost), \ourmethod{}'s cost has a large gap with respect to the baselines, which means that it is particularly useful when rejection is less expensive.


\begin{table}[htpb]
\caption{Cost per example (mean $\pm$ std) per detector aggregated over the datasets. Results show that \ourmethod{} obtains a lower average cost for $9$ out of $12$ detectors and similar average cost as the runner-up \textsc{SS-Repen} for the remaining $3$ detectors. Moreover, \ourmethod{} has the best overall average (last row).}
\label{tab:cost_values_supplement}
\vskip 0.15in
\begin{center}
\begin{scriptsize}
\setlength{\tabcolsep}{2pt}
\begin{sc}
\begin{tabular}{l|c|c|c|c|c|c|c|c}
\toprule
{} & \multicolumn{8}{c}{\textbf{Cost per example} (mean $\pm$ std.)} \\
Det. &\ourmethod{} &\textsc{SS-Repen} &\textsc{Mv} &\textsc{Ens} &\textsc{Udr} &\textsc{Em} &\textsc{Stability} &\textsc{NoReject}\\
\midrule
\textsc{Ae}       &  \textbf{0.122 $\pm$ 0.139} &  0.124 $\pm$ 0.133 &  0.136 $\pm$ 0.143 &  0.134 $\pm$ 0.151 &  0.138 $\pm$ 0.148 &   0.143 $\pm$ 0.150 &  0.143 $\pm$ 0.149 &  0.148 $\pm$ 0.152 \\
\textsc{Copod}    &  \textbf{0.123 $\pm$ 0.138} &  0.125 $\pm$ 0.134 &  0.135 $\pm$ 0.142 &   0.133 $\pm$ 0.140 &   0.140 $\pm$ 0.144 &  0.143 $\pm$ 0.148 &  0.145 $\pm$ 0.147 &  0.146 $\pm$ 0.148 \\
\textsc{Ecod}     &   \textbf{0.120 $\pm$ 0.136} &  0.124 $\pm$ 0.135 &  0.129 $\pm$ 0.136 &  0.133 $\pm$ 0.143 &  0.139 $\pm$ 0.142 &   0.140 $\pm$ 0.145 &  0.143 $\pm$ 0.144 &  0.145 $\pm$ 0.145 \\
\textsc{Gmm}      &  \textbf{0.122 $\pm$ 0.135} &  0.124 $\pm$ 0.137 &  0.144 $\pm$ 0.141 &  0.136 $\pm$ 0.146 &  0.142 $\pm$ 0.145 &  0.156 $\pm$ 0.148 &  0.151 $\pm$ 0.147 &  0.156 $\pm$ 0.149 \\
\textsc{Hbos}     &  \textbf{0.116 $\pm$ 0.129} &  0.123 $\pm$ 0.138 &  0.131 $\pm$ 0.132 &  0.134 $\pm$ 0.136 &  0.135 $\pm$ 0.137 &  0.139 $\pm$ 0.141 &   0.140 $\pm$ 0.139 &  0.142 $\pm$ 0.142 \\
\textsc{IFor}  &  \textbf{0.115 $\pm$ 0.128} &  0.123 $\pm$ 0.135 &   0.130 $\pm$ 0.136 &  0.129 $\pm$ 0.136 &  0.134 $\pm$ 0.139 &   0.140 $\pm$ 0.143 &  0.139 $\pm$ 0.141 &  0.143 $\pm$ 0.144 \\
\textsc{Inne}     &  \textbf{0.113 $\pm$ 0.129} &  0.122 $\pm$ 0.133 &  0.145 $\pm$ 0.134 &   0.146 $\pm$ 0.140 &  0.145 $\pm$ 0.138 &   0.147 $\pm$ 0.140 &  0.146 $\pm$ 0.139 &   0.145 $\pm$ 0.140 \\
\textsc{Kde}      &   \textbf{0.127 $\pm$ 0.140} &  \textbf{0.127 $\pm$ 0.134} &  0.143 $\pm$ 0.145 &  0.138 $\pm$ 0.145 &  0.144 $\pm$ 0.145 &   0.150 $\pm$ 0.148 &  0.145 $\pm$ 0.143 &  0.152 $\pm$ 0.148 \\
\textsc{Knn}      &  \textbf{0.119 $\pm$ 0.123} &  0.125 $\pm$ 0.135 &   0.140 $\pm$ 0.131 &  0.135 $\pm$ 0.131 &   0.135 $\pm$ 0.130 &  0.144 $\pm$ 0.132 &  0.141 $\pm$ 0.131 &  0.146 $\pm$ 0.133 \\
\textsc{Loda}     &  \textbf{0.125 $\pm$ 0.133} &  \textbf{0.125 $\pm$ 0.134} &   0.131 $\pm$ 0.130 &  0.139 $\pm$ 0.137 &   0.140 $\pm$ 0.136 &  0.146 $\pm$ 0.141 &  0.141 $\pm$ 0.131 &  0.151 $\pm$ 0.142 \\
\textsc{Lof}      &  \textbf{0.126 $\pm$ 0.131} &  \textbf{0.126 $\pm$ 0.136} &   0.155 $\pm$ 0.140 &   0.140 $\pm$ 0.139 &  0.142 $\pm$ 0.138 &   0.157 $\pm$ 0.140 &  0.151 $\pm$ 0.139 &   0.158 $\pm$ 0.140 \\
\textsc{Ocsvm}    &   \textbf{0.120 $\pm$ 0.131} &  0.125 $\pm$ 0.133 &  0.138 $\pm$ 0.138 &   0.132 $\pm$ 0.140 &   0.138 $\pm$ 0.140 &   0.141 $\pm$ 0.140 &  0.137 $\pm$ 0.136 &  0.147 $\pm$ 0.143 \\
\midrule
Avg.           &  \textbf{0.121 $\pm$ 0.133}  & 0.125 $\pm$ 0.135 & 0.138 $\pm$ 0.137 & 0.136 $\pm$ 0.140 & 0.139 $\pm$ 0.140 & 0.146 $\pm$ 0.143 & 0.144 $\pm$ 0.140 & 0.148 $\pm$ 0.144\\
\bottomrule
\end{tabular}
\end{sc}
\end{scriptsize}
\end{center}
\vskip -0.1in
\end{table}

\begin{table}[htpb]
\caption{Ranking positions (mean $\pm$ std) per detector aggregated over the datasets. Results show that \ourmethod{} obtains always the lowest average rank, despite being close to the runner-up \textsc{SS-Repen} when the detector is \textsc{Loda}.}
\label{tab:rank_values_supplement}
\vskip 0.15in
\begin{center}
\begin{scriptsize}
\setlength{\tabcolsep}{3pt}
\begin{sc}
\begin{tabular}{l|c|c|c|c|c|c|c|c}
\toprule
{} & \multicolumn{8}{c}{\textbf{Ranking position} (mean $\pm$ std.)} \\
Det. &\ourmethod{} &\textsc{SS-Repen} &\textsc{Mv} &\textsc{Ens} &\textsc{Udr} &\textsc{Em} &\textsc{Stability} &\textsc{NoReject}\\
\midrule
\textsc{Ae}       &  \textbf{2.63 $\pm$ 1.63} &  3.96 $\pm$ 2.94 &   4.78 $\pm$ 2.10 &  3.81 $\pm$ 2.11 &  4.98 $\pm$ 1.88 &   5.15 $\pm$ 1.80 &  4.92 $\pm$ 1.78 &  5.77 $\pm$ 1.84 \\
\textsc{Copod}    &  \textbf{2.49 $\pm$ 1.65} &  3.44 $\pm$ 2.75 &  3.97 $\pm$ 1.92 &  4.25 $\pm$ 2.17 &   5.24 $\pm$ 1.70 &  5.13 $\pm$ 1.67 &  5.59 $\pm$ 1.48 &  5.89 $\pm$ 2.13 \\
\textsc{Ecod}     &  \textbf{2.43 $\pm$ 1.44} &  3.62 $\pm$ 2.86 &  3.73 $\pm$ 1.96 &  4.13 $\pm$ 2.29 &  5.53 $\pm$ 1.57 &  4.75 $\pm$ 1.62 &  5.74 $\pm$ 1.39 &  6.07 $\pm$ 1.93 \\
\textsc{Gmm}      &  \textbf{2.12 $\pm$ 1.08} &  3.05 $\pm$ 2.49 &  5.26 $\pm$ 1.92 &   3.49 $\pm$ 2.00 &  4.43 $\pm$ 1.66 &  6.26 $\pm$ 1.24 &   5.20 $\pm$ 1.43 &   6.20 $\pm$ 1.45 \\
\textsc{Hbos}     &  \textbf{2.29 $\pm$ 1.57} &  3.64 $\pm$ 2.98 &  4.54 $\pm$ 2.11 &  4.39 $\pm$ 2.06 &  4.95 $\pm$ 1.79 &   5.04 $\pm$ 1.80 &   5.61 $\pm$ 1.40 &  5.52 $\pm$ 1.88 \\
\textsc{IFor}  &  \textbf{2.23 $\pm$ 1.48} &  3.78 $\pm$ 2.78 &   4.12 $\pm$ 1.90 &  4.26 $\pm$ 2.08 &   5.10 $\pm$ 1.88 &  5.27 $\pm$ 1.66 &  5.34 $\pm$ 1.38 &  5.91 $\pm$ 2.22 \\
\textsc{Inne}     &  \textbf{1.73 $\pm$ 1.14} &  3.18 $\pm$ 2.74 &  5.86 $\pm$ 2.42 &   5.57 $\pm$ 1.40 &  4.94 $\pm$ 1.62 &   5.57 $\pm$ 1.60 &  5.32 $\pm$ 1.37 &  3.83 $\pm$ 1.63 \\
\textsc{Kde}      &  \textbf{2.33 $\pm$ 1.42} &  3.99 $\pm$ 2.86 &  4.74 $\pm$ 2.06 &  3.79 $\pm$ 2.03 &   5.01 $\pm$ 1.90 &  5.43 $\pm$ 1.59 &  4.87 $\pm$ 1.92 &   5.84 $\pm$ 1.80 \\
\textsc{Knn}      &  \textbf{2.02 $\pm$ 1.29} &  3.58 $\pm$ 2.87 &  4.87 $\pm$ 1.81 &  3.94 $\pm$ 1.94 &  4.41 $\pm$ 1.83 &  5.75 $\pm$ 1.49 &  5.22 $\pm$ 1.62 &  6.21 $\pm$ 1.48 \\
\textsc{Loda}     &  \textbf{2.89 $\pm$ 1.77} &   3.17 $\pm$ 2.30 &  4.15 $\pm$ 2.26 &  4.36 $\pm$ 2.14 &   4.99 $\pm$ 2.00 &   5.50 $\pm$ 2.04 &  4.98 $\pm$ 2.11 &  5.95 $\pm$ 1.73 \\
\textsc{Lof}      &  \textbf{2.04 $\pm$ 1.01} &  3.16 $\pm$ 2.73 &   5.68 $\pm$ 1.40 &  3.32 $\pm$ 1.71 &  3.96 $\pm$ 1.63 &  6.15 $\pm$ 1.19 &  5.47 $\pm$ 1.49 &  6.22 $\pm$ 1.31 \\
\textsc{Ocsvm}    &  \textbf{2.33 $\pm$ 1.29} &  3.92 $\pm$ 2.84 &  4.89 $\pm$ 1.98 &  3.85 $\pm$ 2.17 &  4.86 $\pm$ 1.89 &   5.31 $\pm$ 1.80 &  5.06 $\pm$ 1.89 &  5.78 $\pm$ 1.66 \\
\midrule
Avg.           &  \textbf{2.29 $\pm$ 1.40}  &  3.54 $\pm$ 2.76 &  4.72 $\pm$ 1.99 &  4.10 $\pm$ 2.01 & 4.87 $\pm$ 1.78 &  5.44 $\pm$ 1.63 & 5.28 $\pm$ 1.60 & 5.77 $\pm$ 1.76\\
\bottomrule
\end{tabular}
\end{sc}
\end{scriptsize}
\end{center}
\vskip -0.1in
\end{table}

\begin{table}[htpb]
\caption{Cost per example (mean $\pm$ std) per detector aggregated over the datasets. The table is divided into three parts, where each part has different costs (false positives, false negatives, rejection). Results show that \ourmethod{} obtains a lower average cost in all cases but one (\textsc{Kde}).}
\label{tab:diff_cost_supplement}
\vskip 0.15in
\begin{center}
\begin{scriptsize}
\setlength{\tabcolsep}{2pt}
\begin{sc}
\begin{tabular}{l|c|c|c|c|c|c|c|c}
\toprule
{} & \multicolumn{8}{c}{\textbf{\underline{Cost per Example for Three Cost Functions}} (mean $\pm$ std)}\\
Det. &\ourmethod{} &\textsc{SS-Repen} &\textsc{Mv} &\textsc{Ens} &\textsc{Udr} &\textsc{Em} &\textsc{Stability} &\textsc{NoReject}\\
\midrule
\multicolumn{9}{c}{\textbf{False Positive Cost $= 10$, False Negative Cost $ = 1$, Rejection Cost $= \min \{10(1-\gamma), \gamma \}$}}\\
\midrule
\textsc{Ae} &  \textbf{0.504 $\pm$ 0.626} &  0.584 $\pm$ 0.723 &  0.697 $\pm$ 0.763 &   0.661 $\pm$ 0.830 &  0.703 $\pm$ 0.829 &  0.766 $\pm$ 0.841 &  0.768 $\pm$ 0.826 &  0.825 $\pm$ 0.873 \\
\textsc{Copod} &  \textbf{0.491 $\pm$ 0.637} &  0.593 $\pm$ 0.706 &  0.686 $\pm$ 0.746 &  0.618 $\pm$ 0.726 &  0.707 $\pm$ 0.788 &  0.778 $\pm$ 0.825 &  0.785 $\pm$ 0.801 &  0.781 $\pm$ 0.833 \\
\textsc{Ecod} &  \textbf{0.479 $\pm$ 0.628} &  0.584 $\pm$ 0.727 &  0.625 $\pm$ 0.705 &  0.642 $\pm$ 0.755 &  0.711 $\pm$ 0.774 &  0.748 $\pm$ 0.803 &   0.770 $\pm$ 0.783 &  0.771 $\pm$ 0.817 \\
\textsc{Gmm} &  \textbf{0.568 $\pm$ 0.713} &  0.589 $\pm$ 0.752 &  0.823 $\pm$ 0.878 &  0.715 $\pm$ 0.929 &   0.790 $\pm$ 0.925 &  0.941 $\pm$ 0.948 &  0.889 $\pm$ 0.929 &   0.950 $\pm$ 0.967 \\
\textsc{Hbos} &  \textbf{0.475 $\pm$ 0.595} &  0.569 $\pm$ 0.758 &  0.666 $\pm$ 0.693 &  0.697 $\pm$ 0.732 &  0.709 $\pm$ 0.764 &  0.776 $\pm$ 0.803 &   0.771 $\pm$ 0.770 &  0.809 $\pm$ 0.816 \\
\textsc{IFor} &  \textbf{0.477 $\pm$ 0.602} &  0.575 $\pm$ 0.712 &  0.665 $\pm$ 0.718 &  0.634 $\pm$ 0.731 &  0.683 $\pm$ 0.786 &  0.776 $\pm$ 0.818 &  0.763 $\pm$ 0.788 &  0.808 $\pm$ 0.831 \\
\textsc{Inne} &  \textbf{0.479 $\pm$ 0.592} &  0.567 $\pm$ 0.698 &  0.752 $\pm$ 0.724 &   0.820 $\pm$ 0.795 &  0.815 $\pm$ 0.787 &  0.819 $\pm$ 0.793 &  0.818 $\pm$ 0.792 &  0.823 $\pm$ 0.799 \\
\textsc{Kde} &  0.602 $\pm$ 0.827 &  \textbf{0.589 $\pm$ 0.704} &  0.819 $\pm$ 0.947 &   0.740 $\pm$ 0.913 &  0.793 $\pm$ 0.939 &  0.897 $\pm$ 0.945 &  0.774 $\pm$ 0.906 &  0.914 $\pm$ 0.945 \\
\textsc{Knn} &  \textbf{0.498 $\pm$ 0.577} &  0.596 $\pm$ 0.726 &  0.741 $\pm$ 0.734 &   0.669 $\pm$ 0.720 &  0.669 $\pm$ 0.736 &  0.777 $\pm$ 0.747 &  0.739 $\pm$ 0.735 &    0.800 $\pm$ 0.749 \\
\textsc{Loda} &  \textbf{0.518 $\pm$ 0.619} &  0.574 $\pm$ 0.709 &  0.574 $\pm$ 0.647 &  0.689 $\pm$ 0.729 &  0.701 $\pm$ 0.748 &  0.762 $\pm$ 0.774 &  0.697 $\pm$ 0.682 &  0.827 $\pm$ 0.797 \\
\textsc{Lof} &  \textbf{0.539 $\pm$ 0.623} &  0.603 $\pm$ 0.742 &   0.898 $\pm$ 0.840 &  0.685 $\pm$ 0.773 &   0.715 $\pm$ 0.790 &  0.891 $\pm$ 0.813 &  0.831 $\pm$ 0.821 &  0.887 $\pm$ 0.808 \\
\textsc{Ocsvm} &  \textbf{0.479 $\pm$ 0.599} &  0.589 $\pm$ 0.705 &   0.745 $\pm$ 0.790 &  0.632 $\pm$ 0.752 &  0.694 $\pm$ 0.782 &   0.760 $\pm$ 0.775 &  0.695 $\pm$ 0.737 &  0.818 $\pm$ 0.806 \\
\midrule
\multicolumn{9}{c}{\textbf{False Positive Cost $= 1$, False Negative Cost $ = 10$, Rejection Cost $= \min \{1-\gamma, 10\gamma \}$}}\\
\midrule
\textsc{Ae} &   \textbf{0.730 $\pm$ 0.747} &  0.761 $\pm$ 0.756 &  0.909 $\pm$ 0.882 &  0.784 $\pm$ 0.825 &   0.780 $\pm$ 0.805 &  0.819 $\pm$ 0.843 &  0.789 $\pm$ 0.825 &  0.797 $\pm$ 0.821 \\
\textsc{Copod} &  \textbf{0.761 $\pm$ 0.767} &   0.765 $\pm$ 0.770 &   0.930 $\pm$ 0.888 &  0.794 $\pm$ 0.805 &    0.800 $\pm$ 0.801 &  0.844 $\pm$ 0.842 &  0.802 $\pm$ 0.815 &  0.827 $\pm$ 0.832 \\
\textsc{Ecod} &  \textbf{0.739 $\pm$ 0.759} &  0.767 $\pm$ 0.766 &    0.900 $\pm$ 0.858 &  0.789 $\pm$ 0.811 &  0.788 $\pm$ 0.787 &   0.840 $\pm$ 0.839 &  0.791 $\pm$ 0.803 &  0.821 $\pm$ 0.819 \\
\textsc{Gmm} &   \textbf{0.670 $\pm$ 0.676} &  0.765 $\pm$ 0.767 &  0.845 $\pm$ 0.782 &  0.754 $\pm$ 0.755 &  0.739 $\pm$ 0.736 &  0.785 $\pm$ 0.757 &   0.760 $\pm$ 0.753 &   0.766 $\pm$ 0.750 \\
\textsc{Hbos} &  \textbf{0.687 $\pm$ 0.684} &  0.776 $\pm$ 0.782 &  0.824 $\pm$ 0.808 &   0.750 $\pm$ 0.768 &  0.744 $\pm$ 0.747 &  0.785 $\pm$ 0.787 &  0.749 $\pm$ 0.765 &  0.753 $\pm$ 0.766 \\
\textsc{IFor} &   \textbf{0.679 $\pm$ 0.680} &  0.775 $\pm$ 0.776 &  0.847 $\pm$ 0.824 &  0.755 $\pm$ 0.771 &  0.743 $\pm$ 0.745 &  0.761 $\pm$ 0.772 &  0.757 $\pm$ 0.774 &   0.763 $\pm$ 0.770 \\
\textsc{Inne} &   \textbf{0.660 $\pm$ 0.685} &  0.772 $\pm$ 0.779 &   0.695 $\pm$ 0.620 &  0.774 $\pm$ 0.742 &  0.748 $\pm$ 0.722 &  0.758 $\pm$ 0.737 &  0.744 $\pm$ 0.716 &  0.773 $\pm$ 0.754 \\
\textsc{Kde} &  \textbf{0.691 $\pm$ 0.692} &  0.791 $\pm$ 0.773 &  0.887 $\pm$ 0.836 &   0.754 $\pm$ 0.760 &  0.755 $\pm$ 0.744 &  0.785 $\pm$ 0.807 &  0.758 $\pm$ 0.754 &   0.759 $\pm$ 0.760 \\
\textsc{Knn} &  \textbf{0.706 $\pm$ 0.657} &  0.767 $\pm$ 0.764 &  0.839 $\pm$ 0.779 &  0.791 $\pm$ 0.736 &   0.769 $\pm$ 0.710 &  0.778 $\pm$ 0.736 &  0.799 $\pm$ 0.736 &  0.803 $\pm$ 0.729 \\
\textsc{Loda} &   \textbf{0.750 $\pm$ 0.714} &  0.781 $\pm$ 0.775 &  0.880 $\pm$ 0.850 &  0.811 $\pm$ 0.768 &  0.806 $\pm$ 0.761 &  0.804 $\pm$ 0.783 &   0.827 $\pm$ 0.780 &  0.838 $\pm$ 0.784 \\
\textsc{Lof} &  \textbf{0.738 $\pm$ 0.679} &  0.764 $\pm$ 0.764 &  0.999 $\pm$ 0.833 &  0.826 $\pm$ 0.757 &   0.810 $\pm$ 0.739 &   0.871 $\pm$ 0.770 &  0.867 $\pm$ 0.799 &  0.846 $\pm$ 0.747 \\
\textsc{Ocsvm} &   \textbf{0.730 $\pm$ 0.711} &   0.780 $\pm$ 0.774 &  0.953 $\pm$ 0.878 &  0.791 $\pm$ 0.786 &  0.795 $\pm$ 0.773 &  0.845 $\pm$ 0.833 &  0.787 $\pm$ 0.772 &  0.796 $\pm$ 0.783 \\
\midrule
\multicolumn{9}{c}{\textbf{False Positive Cost $=5$, False Negative Cost $ = 5$, Rejection Cost $= \gamma$}}\\
\midrule
\textsc{Ae} &  \textbf{0.534 $\pm$ 0.611} &  0.618 $\pm$ 0.666 &  0.671 $\pm$ 0.716 &  0.644 $\pm$ 0.741 &  0.655 $\pm$ 0.736 &  0.707 $\pm$ 0.748 &   0.705 $\pm$ 0.740 &  0.738 $\pm$ 0.762 \\
\textsc{Copod} &  \textbf{0.545 $\pm$ 0.619} &  0.627 $\pm$ 0.673 &  0.676 $\pm$ 0.724 &  0.629 $\pm$ 0.674 &  0.666 $\pm$ 0.716 &  0.719 $\pm$ 0.747 &   0.719 $\pm$ 0.730 &  0.731 $\pm$ 0.739 \\
\textsc{Ecod} &  \textbf{0.529 $\pm$ 0.609} &  0.625 $\pm$ 0.675 &  0.629 $\pm$ 0.687 &  0.638 $\pm$ 0.702 &  0.662 $\pm$ 0.705 &  0.701 $\pm$ 0.736 &  0.708 $\pm$ 0.716 &  0.724 $\pm$ 0.727 \\
\textsc{Gmm} &  \textbf{0.534 $\pm$ 0.599} &  0.626 $\pm$ 0.687 &  0.719 $\pm$ 0.709 &  0.656 $\pm$ 0.716 &   0.675 $\pm$ 0.720 &  0.776 $\pm$ 0.736 &  0.746 $\pm$ 0.731 &   0.780 $\pm$ 0.743 \\
\textsc{Hbos} &  \textbf{0.499 $\pm$ 0.572} &  0.622 $\pm$ 0.694 &  0.632 $\pm$ 0.661 &   0.650 $\pm$ 0.669 &  0.641 $\pm$ 0.681 &  0.695 $\pm$ 0.706 &  0.688 $\pm$ 0.686 &   0.710 $\pm$ 0.709 \\
\textsc{IFor} &  \textbf{0.497 $\pm$ 0.569} &  0.623 $\pm$ 0.677 &   0.643 $\pm$ 0.680 &   0.620 $\pm$ 0.667 &  0.629 $\pm$ 0.692 &  0.696 $\pm$ 0.712 &  0.688 $\pm$ 0.701 &  0.714 $\pm$ 0.719 \\
\textsc{Inne} &  \textbf{0.491 $\pm$ 0.569} &  0.617 $\pm$ 0.668 &  0.622 $\pm$ 0.572 &  0.718 $\pm$ 0.691 &  0.691 $\pm$ 0.674 &  0.709 $\pm$ 0.685 &  0.705 $\pm$ 0.675 &  0.726 $\pm$ 0.698 \\
\textsc{Kde} &  \textbf{0.562 $\pm$ 0.639} &  0.642 $\pm$ 0.673 &  0.709 $\pm$ 0.739 &  0.666 $\pm$ 0.711 &  0.684 $\pm$ 0.726 &  0.748 $\pm$ 0.742 &  0.679 $\pm$ 0.689 &  0.761 $\pm$ 0.742 \\
\textsc{Knn} &  \textbf{0.521 $\pm$ 0.544} &  0.628 $\pm$ 0.677 &  0.687 $\pm$ 0.667 &  0.657 $\pm$ 0.646 &  0.634 $\pm$ 0.651 &  0.701 $\pm$ 0.664 &  0.693 $\pm$ 0.657 &  0.728 $\pm$ 0.664 \\
\textsc{Loda} &   \textbf{0.550 $\pm$ 0.595} &  0.627 $\pm$ 0.677 &  0.601 $\pm$ 0.649 &   0.670 $\pm$ 0.668 &   0.665 $\pm$ 0.680 &  0.708 $\pm$ 0.698 &  0.683 $\pm$ 0.645 &  0.757 $\pm$ 0.711 \\
\textsc{Lof} &   \textbf{0.554 $\pm$ 0.580} &  0.628 $\pm$ 0.681 &  0.809 $\pm$ 0.737 &  0.678 $\pm$ 0.682 &  0.674 $\pm$ 0.688 &  0.792 $\pm$ 0.703 &  0.759 $\pm$ 0.718 &  0.788 $\pm$ 0.698 \\
\textsc{Ocsvm} &  \textbf{0.523 $\pm$ 0.582} &  0.631 $\pm$ 0.671 &  0.704 $\pm$ 0.728 &  0.634 $\pm$ 0.685 &  0.657 $\pm$ 0.695 &  0.709 $\pm$ 0.704 &   0.660 $\pm$ 0.674 &  0.733 $\pm$ 0.716 \\
\bottomrule
\end{tabular}
\end{sc}
\end{scriptsize}
\end{center}
\vskip -0.1in
\end{table}

\begin{table}[htpb]
\caption{Rankings (mean $\pm$ std) per detector aggregated over the datasets, where lower positions mean lower costs (better). The table is divided into three parts, where each part has different costs for false positives, false negatives, and rejection. \ourmethod{} obtains the lowest (best) average ranking position for all the detectors and all cost functions.}
\label{tab:diff_rank_supplement}
\vskip 0.15in
\begin{center}
\begin{scriptsize}
\setlength{\tabcolsep}{2pt}
\begin{sc}
\begin{tabular}{l|c|c|c|c|c|c|c|c}
\toprule
{} & \multicolumn{8}{c}{\underline{\textbf{Rankings for the Three Cost Functions}} (mean $\pm$ std)}\\
Det. &\ourmethod{} &\textsc{SS-Repen} &\textsc{Mv} &\textsc{Ens} &\textsc{Udr} &\textsc{Em} &\textsc{Stability} &\textsc{NoReject}\\
\midrule
\multicolumn{9}{c}{\textbf{False Positive Cost $= 10$, False Negative Cost $ = 1$, Rejection Cost $= \min \{10(1-\gamma), \gamma \}$}}\\
\midrule

\textsc{Ae}      &  \textbf{2.35 $\pm$ 1.37} &  3.84 $\pm$ 2.73 &   5.87 $\pm$ 2.40 &  3.68 $\pm$ 2.05 &  4.85 $\pm$ 1.96 &  5.33 $\pm$ 1.67 &  4.72 $\pm$ 1.68 &  5.36 $\pm$ 1.97 \\
\textsc{Copod}   &  \textbf{2.25 $\pm$ 1.45} &  3.63 $\pm$ 2.66 &  4.79 $\pm$ 2.27 &  3.89 $\pm$ 2.27 &  4.94 $\pm$ 1.76 &  5.46 $\pm$ 1.71 &  5.51 $\pm$ 1.45 &  5.54 $\pm$ 2.17 \\
\textsc{Ecod}    &   \textbf{2.28 $\pm$ 1.30} &  3.51 $\pm$ 2.73 &  4.63 $\pm$ 2.36 &  3.85 $\pm$ 2.21 &  5.34 $\pm$ 1.74 &  5.11 $\pm$ 1.66 &  5.38 $\pm$ 1.39 &   5.90 $\pm$ 2.09 \\
\textsc{Gmm}     &  \textbf{2.13 $\pm$ 0.99} &   3.10 $\pm$ 2.43 &  6.36 $\pm$ 2.23 &  3.31 $\pm$ 1.94 &  4.21 $\pm$ 1.69 &  6.28 $\pm$ 1.27 &  5.18 $\pm$ 1.53 &   5.44 $\pm$ 1.50 \\
\textsc{Hbos}    &  \textbf{2.12 $\pm$ 1.42} &  3.46 $\pm$ 2.85 &  5.41 $\pm$ 2.42 &  4.25 $\pm$ 2.06 &  4.78 $\pm$ 1.78 &  5.35 $\pm$ 1.66 &  5.42 $\pm$ 1.47 &   5.20 $\pm$ 1.95 \\
\textsc{IFor} &  \textbf{2.11 $\pm$ 1.49} &  3.69 $\pm$ 2.61 &  4.73 $\pm$ 2.26 &  4.02 $\pm$ 2.11 &  5.07 $\pm$ 1.87 &  5.39 $\pm$ 1.61 &  5.36 $\pm$ 1.41 &  5.63 $\pm$ 2.24 \\
\textsc{Inne}    &  \textbf{1.72 $\pm$ 1.24} &  3.09 $\pm$ 2.68 &  5.42 $\pm$ 2.45 &  6.16 $\pm$ 1.44 &  5.47 $\pm$ 1.59 &   4.60 $\pm$ 1.51 &  5.32 $\pm$ 1.31 &   4.21 $\pm$ 1.80 \\
\textsc{Kde}     &  \textbf{2.14 $\pm$ 1.25} &  3.82 $\pm$ 2.68 &  5.73 $\pm$ 2.36 &  3.54 $\pm$ 1.92 &  4.75 $\pm$ 1.85 &  5.84 $\pm$ 1.58 &  4.83 $\pm$ 1.91 &  5.36 $\pm$ 1.75 \\
\textsc{Knn}     &  \textbf{1.99 $\pm$ 1.28} &   3.50 $\pm$ 2.74 &  5.55 $\pm$ 2.21 &  3.92 $\pm$ 2.02 &  4.37 $\pm$ 1.86 &  5.73 $\pm$ 1.46 &  5.23 $\pm$ 1.68 &  5.71 $\pm$ 1.71 \\
\textsc{Loda}    &  \textbf{2.56 $\pm$ 1.53} &  3.31 $\pm$ 2.31 &  4.29 $\pm$ 2.48 &  4.34 $\pm$ 2.14 &  5.03 $\pm$ 1.95 &  5.42 $\pm$ 1.88 &  4.96 $\pm$ 2.04 &  6.08 $\pm$ 1.75 \\
\textsc{Lof}     &  \textbf{1.96 $\pm$ 1.03} &  3.04 $\pm$ 2.46 &  7.12 $\pm$ 1.28 &   3.14 $\pm$ 1.60 &  3.73 $\pm$ 1.31 &  6.27 $\pm$ 1.14 &  5.59 $\pm$ 1.66 &  5.15 $\pm$ 1.39 \\
\textsc{Ocsvm}   &   \textbf{2.15 $\pm$ 1.20 } &   3.93 $\pm$ 2.70 &   5.92 $\pm$ 2.30 &  3.58 $\pm$ 2.13 &   4.70 $\pm$ 1.92 &   5.40 $\pm$ 1.63 &  5.03 $\pm$ 1.89 &  5.29 $\pm$ 1.67 \\
\midrule
\multicolumn{9}{c}{\textbf{False Positive Cost $=1$, False Negative Cost $ = 10$, Rejection Cost $= \min \{1-\gamma, 10\gamma \}$}}\\
\midrule
\textsc{Ae}      &  \textbf{2.98 $\pm$ 1.93}  &  3.82 $\pm$ 2.72 &  7.03 $\pm$ 1.95 &   4.30 $\pm$ 2.07 &  4.49 $\pm$ 1.82 &  4.96 $\pm$ 1.83 &  4.28 $\pm$ 1.62 &  4.14 $\pm$ 1.93 \\
\textsc{Copod}   &  \textbf{2.91 $\pm$ 2.04}  &  3.56 $\pm$ 2.69 &  7.13 $\pm$ 1.56 &  4.15 $\pm$ 1.97 &  4.43 $\pm$ 1.87 &   5.30 $\pm$ 1.86 &    4.50 $\pm$ 1.60 &  4.03 $\pm$ 1.79 \\
\textsc{Ecod}    &   \textbf{2.70 $\pm$ 1.96 } &  3.88 $\pm$ 2.82 &  6.87 $\pm$ 1.72 &  4.15 $\pm$ 2.02 &  4.74 $\pm$ 1.79 &  5.01 $\pm$ 2.03 &   4.23 $\pm$ 1.50 &   4.42 $\pm$ 1.90 \\
\textsc{Gmm}     &   \textbf{2.59 $\pm$ 1.70} &  3.99 $\pm$ 2.85 &  6.84 $\pm$ 2.22 &  4.04 $\pm$ 2.12 &  4.08 $\pm$ 1.77 &  5.73 $\pm$ 1.37 &  4.62 $\pm$ 1.55 &  4.12 $\pm$ 1.58 \\
\textsc{Hbos}    &  \textbf{2.96 $\pm$ 2.14}  &  4.32 $\pm$ 2.93 &   6.41 $\pm$ 2.20 &  4.49 $\pm$ 1.92 &  4.37 $\pm$ 1.82 &  5.15 $\pm$ 1.94 &  4.48 $\pm$ 1.65 &  3.81 $\pm$ 1.81 \\
\textsc{IFor} &  \textbf{2.71 $\pm$ 2.06}  &  4.51 $\pm$ 2.92 &    6.80 $\pm$ 2.00 &  4.47 $\pm$ 2.09 &  4.62 $\pm$ 1.72 &  4.33 $\pm$ 1.66 &  4.55 $\pm$ 1.47 &  4.01 $\pm$ 1.93 \\
\textsc{Inne}    &  \textbf{2.64 $\pm$ 1.94}  &  4.71 $\pm$ 2.93 &  5.06 $\pm$ 2.95 &  5.85 $\pm$ 1.52 &   5.06 $\pm$ 1.80 &  4.03 $\pm$ 1.64 &   4.72 $\pm$ 1.50 &  3.94 $\pm$ 1.87 \\
\textsc{Kde}     &   \textbf{3.00 $\pm$ 2.01 }&  4.49 $\pm$ 2.93 &  6.51 $\pm$ 2.27 &   4.00 $\pm$ 1.84 &   4.40 $\pm$ 1.68 &  5.01 $\pm$ 1.97 &   4.40 $\pm$ 1.78 &  4.18 $\pm$ 1.96 \\
\textsc{Knn}     &  \textbf{2.64 $\pm$ 2.01} &  4.11 $\pm$ 3.01 &  6.67 $\pm$ 2.23 &  4.17 $\pm$ 1.89 &  4.13 $\pm$ 1.87 &   4.99 $\pm$ 1.60 &  4.88 $\pm$ 1.54 &  4.41 $\pm$ 1.64 \\
\textsc{Loda}    &  \textbf{3.44 $\pm$ 1.96} &  3.66 $\pm$ 2.71 &   6.32 $\pm$ 2.30 &  4.22 $\pm$ 1.94 &  4.36 $\pm$ 1.95 &  4.47 $\pm$ 2.17 &  4.53 $\pm$ 2.09 &  4.99 $\pm$ 1.87 \\
\textsc{Lof}     &  \textbf{2.22 $\pm$ 1.38} &  3.43 $\pm$ 2.67 &  7.74 $\pm$ 0.67 &  3.47 $\pm$ 1.73 &   3.63 $\pm$ 1.40 &  5.95 $\pm$ 1.17 &  5.35 $\pm$ 1.57 &  4.22 $\pm$ 1.38 \\
\textsc{Ocsvm}   &  \textbf{2.82 $\pm$ 1.71} &  3.83 $\pm$ 2.63 &    7.30 $\pm$ 1.50 &  4.23 $\pm$ 2.13 &  4.35 $\pm$ 1.78 &  5.34 $\pm$ 1.65 &  4.32 $\pm$ 1.95 &   3.80 $\pm$ 1.72 \\
\midrule
\multicolumn{9}{c}{\textbf{False Positive Cost $=5$, False Negative Cost $ = 5$, Rejection Cost $= \gamma$}}\\
\midrule
\textsc{Ae}      &  \textbf{2.31 $\pm$ 1.38} &  4.05 $\pm$ 2.78 &  5.85 $\pm$ 2.41 &  3.66 $\pm$ 2.12 &  4.69 $\pm$ 1.86 &  5.27 $\pm$ 1.68 &  4.84 $\pm$ 1.74 &  5.34 $\pm$ 1.92 \\
\textsc{Copod}   &  \textbf{2.24 $\pm$ 1.49} &   3.72 $\pm$ 2.70 &  4.62 $\pm$ 2.17 &  3.98 $\pm$ 2.34 &  4.89 $\pm$ 1.87 &  5.26 $\pm$ 1.58 &   5.57 $\pm$ 1.50 &   5.72 $\pm$ 2.10 \\
\textsc{Ecod}    &  \textbf{2.18 $\pm$ 1.31} &  3.92 $\pm$ 2.75 &  4.22 $\pm$ 2.22 &  3.93 $\pm$ 2.33 &   5.30 $\pm$ 1.82 &  4.91 $\pm$ 1.69 &  5.48 $\pm$ 1.38 &  6.06 $\pm$ 1.94 \\
\textsc{Gmm}     &  \textbf{1.96 $\pm$ 0.97} &  3.31 $\pm$ 2.44 &  6.39 $\pm$ 2.21 &  3.36 $\pm$ 1.89 &  4.09 $\pm$ 1.68 &  6.29 $\pm$ 1.25 &  5.11 $\pm$ 1.51 &   5.48 $\pm$ 1.50 \\
\textsc{Hbos}    &  \textbf{1.98 $\pm$ 1.37} &  3.95 $\pm$ 2.88 &   5.30 $\pm$ 2.46 &  4.18 $\pm$ 2.01 &  4.63 $\pm$ 1.83 &  5.36 $\pm$ 1.68 &  5.41 $\pm$ 1.46 &  5.18 $\pm$ 1.93 \\
\textsc{IFor} &  \textbf{2.01 $\pm$ 1.46} &   4.10 $\pm$ 2.67 &  4.71 $\pm$ 2.26 &  3.96 $\pm$ 2.05 &  4.98 $\pm$ 1.96 &   5.35 $\pm$ 1.60 &  5.29 $\pm$ 1.42 &   5.60 $\pm$ 2.24 \\
\textsc{Inne}    &   \textbf{1.70 $\pm$ 1.25 }&  3.75 $\pm$ 2.82 &  4.17 $\pm$ 2.42 &  6.02 $\pm$ 1.44 &  5.57 $\pm$ 1.78 &  4.56 $\pm$ 1.36 &  5.36 $\pm$ 1.38 &  4.87 $\pm$ 2.08 \\
\textsc{Kde}     &  \textbf{2.22 $\pm$ 1.35} &  4.24 $\pm$ 2.73 &  5.49 $\pm$ 2.55 &   3.62 $\pm$ 2.00 &  4.71 $\pm$ 1.95 &    5.60 $\pm$ 1.50 &  4.79 $\pm$ 1.86 &  5.34 $\pm$ 1.87 \\
\textsc{Knn}     &  \textbf{1.98 $\pm$ 1.23} &  3.88 $\pm$ 2.82 &  5.49 $\pm$ 2.39 &  3.91 $\pm$ 1.86 &  4.29 $\pm$ 1.86 &  5.56 $\pm$ 1.71 &  5.19 $\pm$ 1.63 &   5.69 $\pm$ 1.70 \\
\textsc{Loda}    &   \textbf{2.58 $\pm$ 1.60 }&  3.59 $\pm$ 2.36 &  4.34 $\pm$ 2.58 &  4.26 $\pm$ 2.17 &  4.93 $\pm$ 1.98 &  5.33 $\pm$ 1.98 &  5.02 $\pm$ 1.98 &  5.94 $\pm$ 1.75 \\
\textsc{Lof}     &  \textbf{1.88 $\pm$ 0.96} &  3.26 $\pm$ 2.51 &  7.18 $\pm$ 1.29 &   3.16 $\pm$ 1.60 &  3.65 $\pm$ 1.32 &  6.24 $\pm$ 1.12 &  5.53 $\pm$ 1.69 &   5.10 $\pm$ 1.37 \\
\textsc{Ocsvm}   &  \textbf{2.15 $\pm$ 1.19} &  4.18 $\pm$ 2.77 &  5.73 $\pm$ 2.36 &   3.62 $\pm$ 2.20 &  4.59 $\pm$ 1.88 &  5.39 $\pm$ 1.59 &  5.05 $\pm$ 1.92 &  5.31 $\pm$ 1.67 \\
\bottomrule
\end{tabular}
\end{sc}
\end{scriptsize}
\end{center}
\vskip -0.1in
\end{table}

\end{document}